\documentclass{article}

\usepackage{microtype}
\usepackage{graphicx}
\usepackage{booktabs} 

\usepackage{amsthm}
\usepackage{amsmath}
\usepackage{amssymb}
\usepackage{mathrsfs}
\setlength{\tabcolsep}{3pt}
\usepackage{enumitem}
\usepackage{indentfirst}
\usepackage{dsfont}
\usepackage{breqn}
\usepackage[ruled,vlined]{algorithm2e} 
\usepackage[colorinlistoftodos]{todonotes}
\usepackage[colorlinks=true, allcolors=blue]{hyperref}
\usepackage{shortcuts}
\usepackage{makecell}

\usepackage{float}
\usepackage[caption = false]{subfig}
\usepackage{graphicx}

\usepackage{todonotes}

\usepackage{lipsum}

\usepackage{lipsum}

\usepackage{lipsum}

\DeclareMathOperator*{\vect}{vec}

\usepackage{hyperref}


\usepackage{xr}
\makeatletter
\newcommand*{\addFileDependency}[1]{
  \typeout{(#1)}
  \@addtofilelist{#1}
  \IfFileExists{#1}{}{\typeout{No file #1.}}
}
\makeatother

\newcommand*{\myexternaldocument}[1]{%
    \externaldocument[supp-]{#1}%
    \addFileDependency{#1.tex}%
    \addFileDependency{#1.aux}%
}
\myexternaldocument{supplement}

\usepackage[accepted]{icml2021}


\icmltitlerunning{Best Arm Identification in Graphical Bilinear Bandits}

\bibliographystyle{apalike}

\begin{document}

\newtheorem{theorem}{Theorem}[section]
\newtheorem{lemma}[theorem]{Lemma}
\newtheorem{prop}[theorem]{Proposition}
\newtheorem{corollary}[theorem]{Corollary}
\newtheorem{definition}{Definition}[section]
\newtheorem{conj}{Conjecture}[section]
\newtheorem{example}{Example}[section]
\newtheorem{case}{Case}
\newtheorem{remark}{Remark}[section]

\twocolumn[
\icmltitle{Best Arm Identification in Graphical Bilinear Bandits}



\icmlsetsymbol{equal}{*}

\begin{icmlauthorlist}
\icmlauthor{Geovani Rizk}{PSL,Huawei}
\icmlauthor{Albert Thomas}{Huawei}
\icmlauthor{Igor Colin}{Huawei}
\icmlauthor{Rida Laraki}{PSL,Liverpool}
\icmlauthor{Yann Chevaleyre}{PSL}
\end{icmlauthorlist}

\icmlaffiliation{PSL}{PSL - Université Paris Dauphine, CNRS, LAMSADE, Paris, France}
\icmlaffiliation{Huawei}{Huawei Noah's Ark Lab}
\icmlaffiliation{Liverpool}{Liverpool University}

\icmlcorrespondingauthor{Geovani Rizk}{geovani.rizk@dauphine.psl.eu}

\icmlkeywords{Machine Learning, ICML}

\vskip 0.3in
]



\printAffiliationsAndNotice{}  

\begin{abstract}
We introduce a new graphical bilinear bandit problem where a learner (or a \emph{central entity}) allocates arms to the nodes of a graph and observes for each edge a noisy bilinear reward representing the interaction between the two end nodes. We study the best arm identification problem in which the learner wants to find the graph allocation maximizing the sum of the bilinear rewards. By efficiently exploiting the geometry of this bandit problem, we propose a \emph{decentralized} allocation strategy based on random sampling with theoretical guarantees. In particular, we characterize the influence of the graph structure (e.g. star, complete or circle) on the convergence rate and propose empirical experiments that confirm this dependency.
\end{abstract}

\section{Introduction}
\label{sec:intro}

In many multi-agent systems the contribution of an agent to a common team objective is impacted by the behavior of the other agents. The agents must coordinate (or be coordinated) to achieve the best team performance.
Consider, for instance, the problem of configuring antennas of a wireless cellular network to obtain the best signal quality over the whole network \citep{siomina2006wireless}.
The signal quality of the region covered by a given antenna might be degraded by the behavior of its neighboring antennas due to an increase of interferences or bad user handovers.
Another example is the adjustment of the turbine blades of a wind farm where the best adjustment for one turbine may generate turbulence for its neighboring turbines and thus be suboptimal for the global wind farm objective \citep{bargiacchi2018windfarms}.

These real-life problems can be viewed as instances of a \emph{stochastic multi-agent multi-armed bandit} problem \citep{robbins1952some, bargiacchi2018windfarms} where a learner (or a \textit{central entity}) sequentially pulls a joint arm, one arm for each agent (\eg all the configuration parameters of the antennas), and receives an associated global noisy reward (\eg the signal quality over the whole network). The goal of the learner can either be to maximize the accumulated reward, implying a trade-off between exploration and exploitation, or to find the joint arm maximizing the reward, known as \emph{pure exploration} or \emph{best arm identification} \citep{bubeck2009pure,audibert2010best}. 

In this paper we focus on the best arm identification problem in a multi-agent system for which we assume the knowledge of a coordination graph $\mathcal{G}=(V,E)$ representing the agent interactions \citep{Guestrin2002coordgraph}.

At each round $t$, a learner 
\begin{enumerate}
    \item chooses for each node $i \in V$ an arm $x_{t}^{(i)}$ in an finite arm set $\mathcal{X} \subset \mathbb{R}^d$,
    \item observes for each edge $(i,j)\in E$ a bilinear reward  $r_{t}^{(i,j)}=x_{t}^{(i)\top} {\mathbf{M}_{\star}}x_{t}^{(j)}+\eta_{t}^{(i,j)}$.
\end{enumerate}
Here, we denote by $\mathbf{M}_\star \in \mathbb{R}^{d\times d}$ the unknown parameter matrix, and $\eta^{(i,j)}_t$ a zero-mean $\sigma$-sub-Gaussian random variable for all edges $(i, j) \in E$ and round $t$. 

The goal of the central entity is to find, within a minimum number of rounds, the joint arm $(x^{(1)}_\star, \dots, x^{(|V|)}_\star)$ such that the expected global reward $\sum_{(i,j)\in E}x_\star^{(i)\top}{\mathbf{M}_{\star}}x_\star^{(j)}$ is maximized. 

The reward $r^{(i,j)}_{t}$ reflects the quality of the interaction between the neighboring nodes $i$ and $j$ when pulling respectively the arm $x^{(i)}_t$ and $x^{(j)}_t$ at time $t$. For instance, when configuring handover parameters of a wireless network, $r^{(i,j)}_{t}$ can be any criterion assessing the handover quality between antenna $i$ and antenna $j$, the parameters selected by each antenna both impacting this quantity. The bilinear setting appears as a natural extension of the commonly studied linear setting to model the interaction between two agents. Furthermore, instead of a global reward being a sum of independent linear agent rewards,  
the global reward is now the result of the interactions between neighboring agents.

As exposed in \citet{jun2019bilinear}, the bilinear reward can be written as a linear reward in a higher dimensional space:
\begin{align}
r^{(i,j)}_{t} = \vect{\left(x^{(i)}_tx^{(j)\top}_t\right)}^\top \vect{\left(\mathbf{M}_\star\right)} + \eta^{(i,j)}_t \enspace,\label{eq:reward_linear_form}
\end{align}

where for any matrix $\mathbf{A} \in \mathbb{R}^{d\times d}$, $\vect{\left(\mathbf{A}\right)}$ denotes the vector in $\mathbb{R}^{d^2}$ which is the concatenation of all the columns of $\mathbf{A}$. 

Since the unknown parameter $\mathbf{M}_{\star}$ is common to all the edges $(i,j)$ of the graph, the expected global reward 
at time $t$ can also be written as the scalar product $\left\langle \sum_{(i,j)\in E}\vect{\left(x^{(i)}_t x_t^{(j)\top}\right)}, \vect{(\mathbf{M}_\star)}\right\rangle$. Hence, solving the best arm identification problem in the described graphical bilinear bandit boils down to solving the same problem in a global linear bandit. Although this trick allows to use classical algorithms 
in linear bandits, the number of joint arms is growing exponentially with the number of nodes,
making such  methods impractical.

Another possible way to address this problem based on equation \eqref{eq:reward_linear_form} is to consider one linear bandit per edge, with constraints between edges. For more clarity, let us define the arm set $\mathcal{Z} = \{\vect{(x x^{\prime\top})} \ | \ (x ,x^{\prime}) \in \mathcal{X}^2\}$, and let us refer to any $z \in \mathcal{Z}$ as an \emph{edge-arm} and to any $x \in \mathcal{X}$ as an \emph{node-arm}. At each round $t$, the learner chooses for each edge $(i,j)$ an edge-arm in $\mathcal{Z}$ with the constraint that for any pair of edges $(i,j)$ and $(i,k)$, if the edge-arm $\vect{(x x^{\prime\top})}$ is assigned to the edge $(i,j)$ and the edge-arm $\vect{(x^{\prime\prime} x^{\prime\prime\prime\top})}$ is assigned to the edge $(i,k)$, then it must be that  $x = x^{\prime\prime}$.

Given this constraint, how do we choose the appropriate sequence of edge-arms in order to build a good estimate of $\vect{(\mathbf{M}_\star)}$? Moreover, assuming we have built such a good estimator, is there a tractable algorithm to identify the best joint arm, or at least to find a joint arm yielding a high expected reward? In this paper, we answer these questions and provide algorithms and theoretical guarantees. 

We show that even with a perfect estimator $\vect{(\hat{\mathbf{M}})} = \vect{(\mathbf{M}_\star)}$, identifying the best joint arm is NP-Hard
. To address this issue, we design a polynomial time twofold algorithm. Given $\vect{(\hat{\mathbf{M}})}$,  it first identifies the best edge-arm $z_\star \in \mathcal{Z}$ maximizing $\langle z_\star, \vect{(\hat{\mathbf{M}})}\rangle$. Then, it  allocates  $z_\star$ to a carefully chosen subset of edges. We show that this
yields a good approximation ratio in Section~\ref{sec:best_arm}.

To build our estimator $\vect{(\hat{\mathbf{M}})}$, we rely on the G-Allocation strategy, as in \citet{SoareNIPS2014}. We show that there exists a sampling procedure over the node-arms such that the associated edge-arms follow the optimal G-allocation strategy developed in the linear bandit literature. This procedure allows us to avoid the difficulty of having to satisfy the edge-arm constraints explicitly.
Furthermore, we analyze the sample complexity of this method. This is detailed in Section~\ref{sec:estimating_theta}.

In addition, we highlight the impact of the graph structure in Section~\ref{sec:variance} 
and provide the explicit repercussion on the convergence rate of the algorithm for different types: star, complete, circle and matching graphs. In particular, we show that for favorable graph structures (e.g. circles), our convergence rate matches that of standard linear bandits.
Finally, Section~\ref{sec:expe} evidences the theoretical findings on numerical experiments.

\section{Related Work}
\label{sec:related_work}



\textbf{Best arm identification in linear bandits. } There exists a vast literature on the problem of best arm identification in linear bandits \citep{SoareNIPS2014, XuAISTATS2018, degenne2020gamification, kazerouni2019best, zaki2020explicit,jedra2020optimal}, would it be by using greedy strategies \citep{SoareNIPS2014}, rounding procedures \citep{tanner2019transductive} or random sampling \citep{TaoICML2018}. Although our problem can be formulated as a linear bandit problem, 
none of the existing methods 
would scale-up with the number of agents. Nevertheless, we will be relying on classical techniques, and more specifically those developed in \citet{SoareNIPS2014}. 

\textbf{Bilinear bandits. } Bandits with bilinear rewards have been studied in \citet{jun2019bilinear}.
The authors derived a no-regret algorithm based on Optimism in the Face of Uncertainty Linear bandit (OFUL) \citep{abbasi2011improved}, using the fact that a bilinear reward can be expressed as a linear reward in higher dimension.
Our work extends their setting by considering a set of dependent bilinear bandits. Besides, the goal here is to find the best arm rather than minimizing the regret.

\textbf{Bandits and graphs. } Graphs are often used to bring structure to a bandit problem. In \citet{valko2014spectralbandits} and \citet{mannor2011bandits}, the arms are the nodes of a graph and pulling an arm gives information on the rewards of the neighboring arms. The reader can also refer to \citet{valko2020bandits} for an account on such problems. In \citet{cesa2013gang} each node is an instance of a linear bandit and the neighboring nodes are assumed to have similar unknown regression coefficients. The main difference with our setting is that the rewards of the nodes are independent.

\textbf{Combinatorial and multi-agent bandits. } Allocating arms to each node of a graph to then observe a global reward is a combinatorial bandit problem \citep{cesabianchi2012}, the number of joint arms scaling exponentially with the number of nodes. This
has been extensively studied both in the regret-based \citep{chen2013combinatorial,Perrault2020} and the pure exploration context \citep{Chen2014,Cao2019,Jourdan2021, du2020combinatorial}. Our problem is closer to the one presented in \citet{amin2012graphical} and \citet{bargiacchi2018windfarms}, where several agents want to maximize a global team reward that can be decomposed into a sum of observable local rewards as in a \emph{semi-bandit game} \citep{audibert2011semibandits, chen2013combinatorial}. 
However, we study a more structured context as we assume observable bilinear rewards for each edge of the 
graph. Furthermore, note that our problem can be solved by the algorithm presented in \citet{du2020combinatorial} with a sample
complexity increasing in the number of nodes. On the contrary, we propose in this paper an algorithm with a sample complexity decreasing in the number of nodes exploiting the structure of the bilinear reward and the graph.
Finally, most of the algorithms developed for combinatorial bandits assume the availability of an oracle to solve the combinatorial optimization problem returning the arm to play or the final best arm recommendation. We make no such assumption.
\section{Preliminaries and Notations}
\label{sec:preliminaries}
Let $\mathcal{G}=(V,E)$ be a directed graph with $V$ the set of nodes, $E$ the set of edges where we assume that
if $(i, j)  \in E$ then $(j, i) \in E$, and $\mathcal{N}(i)$ the set containing the neighbors of a node $i \in V$. We denote by $n = |V|$ the number of nodes and $m = |E|$ the number of edges. We define the \emph{graphical bilinear bandit} on the graph $\mathcal{G}$ as the setting where a learner sequentially pulls at each round $t$ a joint arm $\left(x^{(1)}_t, \dots, x^{(n)}_t\right) \in \mathcal{X}^n$, also called graph allocation or simply allocation when it is clear from the context, and then receives a bilinear reward $r^{(i,j)}_t$ for each edge $(i,j) \in E$. At each round, the joint arm can be constructed simultaneously or sequentially, however all the bilinear rewards are only revealed after the joint arm has been pulled.

We denote $K = |\mathcal{X}|$ the number of node-arms and it is assumed that 
$\mathcal{X}$ spans $\mathbb{R}^d$. For each round $t$ of the learning procedure and each node $i \in V$, $x^{(i)}_t \in \mathcal{X}$ represents the node-arm allocated to the node $i \in V$. For each edge $(i,j) \in E$, we denote $z^{(i,j)}_t =  \vect{(x^{(i)}_t x^{(j)\top}_t)} \in \mathcal{Z}$ the associated chosen edge-arm.

The goal is to derive an algorithm that minimizes the number of pulled joint arms required to find the one maximizing the sum of the associated expected bilinear rewards, for a given confidence level. For the sake of simplicity, we assume that the unknown parameter matrix $\mathbf{M}_\star$ in the bilinear reward is symmetric. We provide an analysis of the non-symmetric case in Appendix \ref{sec:supp_generalization}.

For any finite set $X$, $\mathcal{S}_X \triangleq \{ \lambda \in [0, 1]^{|X|} \text{, } \sum_{x \in X} \lambda_x = 1 \}$ denotes the simplex in $\mathbb{R}^{|X|}$. For any vector $x \in \bbR^d$, $\| x \|$ will denote the $\ell_2$-norm of $x$. For any square matrix $\mathbf{A} \in \bbR^{d \times d}$, we denote by $\|\mathbf{A}\|\triangleq \sup_{x: \|x\|=1} \| \mathbf{A}x\|$ the spectral norm of $\mathbf{A}$. Finally, for any vector $x \in \mathbb{R}^d$ and a symmetric positive-definite matrix $\mathbf{A} \in \mathbb{R}^{d\times d}$, we define $\|x\|_{\mathbf{A}} \triangleq \sqrt{x^\top \mathbf{A} x}$.
\section{An NP-Hard Problem}
\label{sec:best_arm}

In this section, we address the problem of finding the best joint arm given $\mathbf{M}_\star$ or a good estimator $\hat{\mathbf{M}}$. If the best edge-arm $z_\star$ is composed of a single node-arm $x_\star$, that is $z_\star = \vect{(x_\star x_\star^{\top})}$, then finding the best joint arm is trivial and the solution is to assign $x_\star$ to all nodes. Conversely, if $z_\star$ is composed of two distinct node-arms $(x_\star,x_\star^\prime)$, the problem is harder. 

The following theorem states that, even with the knowledge of the true parameter $\mathbf{M}_\star$, identifying the best join-arm is NP-Hard with respect to the number of nodes $n$.
\begin{theorem}
\label{theorem:nphard}
Consider a given matrix $\mathbf{M}_\star \in \mathbb{R}^{d \times d}$ and a finite arm set $\mathcal{X} \subset \mathbb{R}^d$. Unless P=NP, there is no polynomial time algorithm guaranteed to find the optimal solution of 
\begin{align*}
\max_{\left(x^{(1)}, \ldots, x^{(n)}\right) \in\mathcal{X}^n} \sum_{(i,j)\in E} x^{(i)\top}\mathbf{M}_\star \ x^{(j)}\enspace.
\end{align*}
\end{theorem}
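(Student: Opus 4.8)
The plan is to prove NP-hardness by a polynomial-time reduction from \textsc{Max-Cut}, one of the classical NP-complete problems. Given a \textsc{Max-Cut} instance on an undirected graph $G=(V,E_G)$, I would build a graphical bilinear bandit instance on the directed graph $\mathcal{G}=(V,E)$ obtained by replacing each undirected edge $\{i,j\}$ with the two directed edges $(i,j)$ and $(j,i)$. The vertex set is unchanged, so the number of nodes $n=|V|$ is preserved exactly, which is essential since the theorem asserts hardness \emph{with respect to $n$}. I would keep the dimension $d$ and the arm set $\mathcal{X}$ of constant size, so that the only quantity growing with the input is $n$.

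For the gadget I would fix $d=2$ and take the two-element arm set $\mathcal{X}=\{e_1,e_2\}$ with $e_1=(1,0)^\top$ and $e_2=(0,1)^\top$, which spans $\mathbb{R}^2$ as required, together with the symmetric anti-diagonal matrix $\mathbf{M}_\star=\begin{pmatrix}0&1\\1&0\end{pmatrix}$. A direct computation gives $e_1^\top \mathbf{M}_\star e_1 = e_2^\top \mathbf{M}_\star e_2 = 0$ and $e_1^\top \mathbf{M}_\star e_2 = e_2^\top \mathbf{M}_\star e_1 = 1$, so the bilinear reward equals $1$ exactly when the two incident node-arms differ and $0$ when they coincide. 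Interpreting the choice of $e_1$ versus $e_2$ at each node as assigning that node to one of the two sides of a cut, an undirected edge $\{i,j\}$ contributes $2$ to the objective (once through $(i,j)$ and once through $(j,i)$) precisely when $i$ and $j$ lie on opposite sides, and $0$ otherwise.

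It then follows that $\sum_{(i,j)\in E} x^{(i)\top}\mathbf{M}_\star x^{(j)}$ equals exactly twice the size of the cut induced by the corresponding $2$-coloring, so a maximizer of the bilinear objective yields a maximum cut of $G$ and the optimal value determines the max-cut value. To match the exact wording of the statement I would invoke the decision version of \textsc{Max-Cut} (``is there a cut of size at least $k$?''), which is NP-complete: a hypothetical polynomial-time algorithm returning an optimal allocation would let me compute the maximum objective value, hence the max-cut value, and thus decide the instance. Since the whole construction is clearly polynomial, this forces $\mathrm{P}=\mathrm{NP}$.

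The genuinely non-routine step is the reduction design itself, namely recognizing that over a binary arm set the bilinear objective becomes an Ising/quadratic form and then choosing $\mathbf{M}_\star$ and $\mathcal{X}$ so that this form reproduces the cut value exactly; once the gadget is in place the rest is verification. The main point requiring care is the directed-edge bookkeeping: each undirected edge appears twice in $E$, but because $\mathbf{M}_\star$ is symmetric the two directed copies contribute equally, so the harmless factor of $2$ preserves the exact correspondence between optima. I would finish by checking the minor admissibility conditions of the constructed instance, that $\mathcal{X}$ spans $\mathbb{R}^d$ and that $\mathbf{M}_\star$ is symmetric, both of which hold by construction, confirming that the instance is legitimate for the problem as stated.
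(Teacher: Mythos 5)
Your proposal is correct and is essentially identical to the paper's own proof: the same reduction from Max-Cut using the arm set $\{(1,0)^\top,(0,1)^\top\}$, the same anti-diagonal matrix $\mathbf{M}_\star$, and the same observation that the bilinear objective equals twice the induced cut value. Your added care about the directed-edge doubling and the decision version of Max-Cut only makes explicit what the paper leaves implicit.
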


The proof of this theorem is in Appendix \ref{sec:supp_finding_best_arm} and relies on a reduction to the Max-Cut problem.
Hence, no matter which estimate $\hat{\mathbf{M}}$ of $\mathbf{M}_\star$ one can build,
the learner is not guaranteed to find in polynomial time the joint arm $\left(x^{(1)}_\star, \dots, x^{(n)}_\star\right)$ maximizing the expected global reward.
However, one can notice that, given the matrix $\mathbf{M}_\star$ or even a good enough estimate $\hat{\mathbf{M}}$, identifying the edge-arm $z^\star = \vect{(x_\star x_\star^{\prime\top})} \in \mathcal{Z}$ that maximizes the reward $z_\star^\top \vect{(\mathbf{M}_\star)}$ requires only $K^2$ reward estimations (we simply estimate all the linear reward associated to each edge-arm in $\mathcal{Z}$). Thus, instead of looking for the best joint arm explicitely, 
we will first identify the best edge-arm $z^\star$, and then allocate $z^\star$ to the largest number of edges in the graph.
We will also show that this approach gives a guarantee on its associated global reward.

Let us consider the graph allocation that places the maximum number of edge-arms $z_\star$ in $\mathcal{G}$. It is easy to show that the subgraph containing only the edges where $z_\star$ has been pulled is the largest bipartite subgraph included in $\mathcal{G}$. Recall that a graph $\mathcal{G}^\prime = (V^\prime,E^\prime)$ is a bipartite if and only if one can partition the node set $V^\prime = (V^\prime_1,V^\prime_2)$ such that
\begin{align*}
    (i,j) \in E^\prime \Rightarrow (i, j) \in V^\prime_1 \times V^\prime_2 \text{  or  } (j, i) \in V^\prime_1 \times V^\prime_2 \enspace.
\end{align*}
Notice, that if $\mathcal{G}^\prime$ is the largest bipartite subgraph in $\mathcal{G}$, the number of edges in $E^\prime$ is the maximal number of edge-arms $z_\star$ that can be allocated with a single graph allocation. 

Hence, finding the joint arm with the largest number of edge-arms $z_\star$ allocated in the graph is equivalent to finding the largest bipartite subgraph $\mathcal{G}^\prime = (V^\prime,E^\prime)$ in $\mathcal{G}$. Once that subgraph is determined, we just need to allocate to all the nodes in $V^\prime_1$ the node-arm $x_\star$ and to all the nodes of $V^\prime_2$ the node-arm $x_\star^\prime$ (which is equivalent to allocating to all the edges in $E^\prime$ the edge-arm $z_\star$).

Furthermore, we know that every $m$-edge graph contains a bipartite subgraph of at least $m/2$ edges \citep{erdos1975problems}. Therefore, we propose Algorithm~\ref{algorithm:bipartite} which iteratively constructs a bipartite subgraph and allocates the nodes accordingly to create at least $m/2$ edge-arms $z_\star$.


\begin{algorithm}

    \SetKwInOut{Input}{Input}\SetKwInOut{Output}{Output}
    
    \SetAlgoLined
    
    \Input{$\mathcal{G} = (V,E)$, $\mathcal{X}$, $\mathbf{M}$}
    
    Find $(x_\star,x^\prime_\star) \in \argmax_{(x, x^\prime)\in \mathcal{X}^2} x^\top \mathbf{M} \ x^\prime$
    
    Set $V_1 = \emptyset$, $V_2 = \emptyset$
    
    \For{$i$ in $V$}{
    
        Set $n_1$ the number of neighbors of $i$ in $V_1$
        
        Set $n_2$ the number of neighbors of $i$ in $V_2$
        
        \uIf{$n_1 > n_2$}{
        
            $x^{(i)} = x^\prime_\star$
            
            $V_2 \leftarrow V_2 \cup \{i\}$
            
        }
        \Else{
        
            $x^{(i)} = x_\star$
            
            $V_1 \leftarrow V_1 \cup \{i\}$
            
        }
    }
    
    return $\mathbf{x} = \left(x^{(1)},\dots,x^{(n)}\right)$

    \caption{Bipartite graph algorithm for Best Arm Identification in Graphical Bilinear Bandits}
    \label{algorithm:bipartite}
\end{algorithm}

The following result gives the guarantee on the global reward associated to the joint arm returned by Algorithm~\ref{algorithm:bipartite}. We refer the reader to Appendix \ref{sec:supp_finding_best_arm} for the proof.

\begin{theorem}
\label{theorem:halfOptimiality}
Let us consider the graph $\mathcal{G}=(V,E)$, a finite arm set $\mathcal{X} \subset \mathbb{R}^d$ and the matrix $\mathbf{M}_\star$ given as input to Algorithm~\ref{algorithm:bipartite}. Then, the expected global reward $r = \sum_{(i,j)\in E} x^{(i)\top} \mathbf{M}_\star x^{(j)}$ associated to the returned allocation $\mathbf{x}=\left(x^{(1)},\dots,x^{(n)}\right) \in \mathcal{X}^n$ verifies:
\begin{align*}
    \frac{r - r_{\mathrm{min}}}{r_{\star} - r_{\mathrm{min}}} \geq \frac{1}{2} \enspace.
\end{align*}

where $r_{\star}$ and $r_{\mathrm{min}}$ are respectively the highest and lowest global reward one can obtain with the appropriate joint arm.
Finally, the complexity of the algorithm is in $\mathcal{O}(K^2 + n)$.
\end{theorem}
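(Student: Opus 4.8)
The plan is to reduce the stated ratio bound to the single scalar inequality $2r \ge r_\star + r_{\min}$, which is equivalent to $\frac{r-r_{\min}}{r_\star-r_{\min}}\ge\frac12$ as soon as $r_\star>r_{\min}$ (if $r_\star=r_{\min}$ every allocation is optimal and there is nothing to prove). To this end I would attach to the best edge-arm $(x_\star,x_\star^\prime)$ the three numbers $\gamma = x_\star^\top\mathbf{M}_\star x_\star^\prime = \max_{(x,x^\prime)\in\mathcal{X}^2} x^\top\mathbf{M}_\star x^\prime$ and the two self-interaction values $\alpha = x_\star^\top\mathbf{M}_\star x_\star$, $\beta = x_\star^{\prime\top}\mathbf{M}_\star x_\star^\prime$, noting $\gamma\ge\alpha$ and $\gamma\ge\beta$ by maximality. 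Since $\mathbf{M}_\star$ is symmetric the two node-arms play symmetric roles, so I may assume $\alpha\le\beta$. Two elementary bounds then bracket the extreme rewards: each of the $m$ directed edges contributes at most $\gamma$, whence $r_\star\le m\gamma$; and the constant allocation sending every node to $x_\star$ is feasible with reward $m\alpha$, whence $r_{\min}\le m\alpha$.

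Next I would examine the returned allocation via its greedy bipartition $(V_1,V_2)$. I would first establish the classical local max-cut guarantee: when a node is inserted it is sent to the side creating $\max(n_1,n_2)$ cut edges against its already-placed neighbours, so at least half of its back-edges are cut; charging each edge to its later endpoint gives at least $m/2$ cut edges. Letting $C_d,P_d,Q_d$ count the directed edges that are cut, internal to $V_1$, and internal to $V_2$, this reads $C_d+P_d+Q_d=m$ and $C_d\ge m/2$, hence $C_d\ge P_d+Q_d$. Using the symmetry of $\mathbf{M}_\star$, the reward of the returned allocation decomposes exactly as $r = C_d\gamma+P_d\alpha+Q_d\beta$.

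It then remains to combine the pieces. Substituting $r_\star\le m\gamma$, $r_{\min}\le m\alpha$ and $m=C_d+P_d+Q_d$ into $2r-r_\star-r_{\min}$ collapses it to $(C_d-P_d)(\gamma-\alpha)+Q_d(2\beta-\gamma-\alpha)$; since $C_d-P_d\ge Q_d$ and $\gamma\ge\alpha$ the first term is at least $Q_d(\gamma-\alpha)$, leaving $2Q_d(\beta-\alpha)\ge 0$ and closing the bound. The complexity claim is then immediate: selecting $(x_\star,x_\star^\prime)$ scans the $K^2$ node-arm pairs and the greedy loop is a single pass over the graph.

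The step I expect to be the real obstacle is controlling the \emph{true} minimum $r_{\min}$ rather than a per-edge proxy. The tempting bound $r_{\min}\ge m\min_{x,x^\prime}x^\top\mathbf{M}_\star x^\prime$ is worthless, since lowering the subtracted baseline only pushes the ratio up in the wrong direction; the workable idea is the opposite inequality $r_{\min}\le m\alpha$ obtained from an explicit cheap allocation, which pins $r_{\min}$ to the self-interaction value $\alpha$ and makes the greedy surplus $C_d\ge P_d+Q_d$ exactly sufficient to absorb it.
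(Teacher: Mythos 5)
Your proof is correct and takes essentially the same route as the paper's: both rest on the greedy half-cut guarantee ($C_d \ge P_d + Q_d$), the per-edge bound $r_\star \le m\gamma$, and---crucially---the upper bound $r_{\min} \le m\min(\alpha,\beta)$ via the explicit constant allocation, which is exactly the paper's observation that assigning $x_\star$ to every node realizes reward $m\,z^\top\theta_\star \ge r_{\min}$. The only difference is bookkeeping: the paper sums a per-node inequality $r_i \ge \tfrac{n_1^{(i)}+n_2^{(i)}}{2m}\left(r_\star + r_{\min}\right)$ over the greedy insertion order, whereas you aggregate globally into the edge counts $(C_d, P_d, Q_d)$ and close the bound by algebra; the two accountings are equivalent.
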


This type of approximation result is sometimes referred to as \emph{differential approximation} or \emph{$z$-approximation}, and is often viewed as a more subtle analysis than standard approximation ratio. We emphasize that finding a better ratio than $\frac{1}{2}$ is a very hard task: such a finding 
would immediately yield an improved differential approximation ratio for the Max-Cut problem, which is an opened problem since 2001 \citep{hassin2001z}.



\section{Construction of the Estimate $\hat{\mathbf{M}}$}
\label{sec:estimating_theta}

In the previous section, we designed a polynomial time method that computes a $1/2$-approximation to the NP-Hard problem of finding the best joint arm given $\mathbf{M}_\star$. Notice that $\mathbf{M}_\star$ is only used to identify the best edge-arm $z_\star$. Thus, using an estimate $\hat{\mathbf{M}}$  of $\mathbf{M}_\star$ having the following property: 
\begin{align}
\label{eq:same_arg_max}
    \argmax_{z \in \mathcal{Z}} z^\top \vect{(\hat{\mathbf{M}})} = \argmax_{z \in \mathcal{Z}} z^\top \vect{(\mathbf{M}_\star)} \enspace,
\end{align} 
would still allow us to identify $z_\star$, and would thus give us the same guarantees.

In this section we tackle the problem of pulling the edge-arms during the learning procedure such that the estimated unknown parameter verifies \eqref{eq:same_arg_max} in as few iterations as possible. To do so, we first formalize the objective related to the 
linearized version of the problem. Then, we propose an algorithm reaching the given objective with high probability while satisfying the edge-arms constraints.

We denote by $\theta_\star = \vect{\left(\mathbf{M}_\star\right)}$ the parameter of the linearized problem and $\hat{\theta}_t$ the Ordinary Least Squares (OLS) estimate of $\theta_\star$ computed with all the data collected up to round $t$. The empirical gap between two edge-arms $z$ and $z^\prime$ in $\mathcal{Z}$ is denoted $\hat{\Delta}_t\left(z,z^\prime\right) \triangleq \left(z - z^\prime\right)^{\top} \hat{\theta}_t$.

\subsection{A Constrained G-Allocation}

The goal here is to define the optimal sequence $\left(z_1,\dots, z_{mt}\right) \in \mathcal{Z}^{mt}$ that should be pulled in the first $t$ rounds so that \eqref{eq:same_arg_max} is reached as soon as possible. A natural approach is to rely on classical strategies developed for best arm identification in linear bandits.
Most of the known strategies (see \eg \citet{SoareNIPS2014,XuAISTATS2018,tanner2019transductive}) are based on a bound of the gap error $\vert (\theta_{\star} - \hat{\theta}_t)^{\top}(z - z^\prime) \vert$ for all $z, z^\prime \in \mathcal{Z}$. 
This bound is then used to derive a stopping condition, indicating a sufficient number of rounds $t$ after which the OLS estimate $\hat{\theta}_t$ is precise enough to ensure the identification of the best edge-arm, with high probability. 

Let $\delta \in (0,1)$ and let $\mathbf{A}_t = \sum_{i=1}^{mt} z_iz_i^\top$ be the matrix computed with the $mt$ edge-arms constructed during $t$ rounds. Following the steps of \citet{SoareNIPS2014}, we can show that if there exists $z \in \mathcal{Z}$ such that for all $z^\prime \in \mathcal{Z}$ the following holds:
\begin{align}
    \|z - z^\prime \|_{\mathbf{A}_t^{-1}} \sqrt{8 \sigma^2 \log{\left(\frac{6m^2t^2 K^4}{\delta \pi^2} \right)}} \leq \hat{\Delta}_t\left(z,z^\prime\right) \enspace,
    \label{eq:stopping_condition}
\end{align}
then with probability at least $1-\delta$, the OLS estimate $\hat{\theta}_t$ leads to the best edge-arm. Details of the derivation are given in Appendix \ref{sec:supp_stopping_condition}.
As mentioned in \citet{SoareNIPS2014}, by noticing that $\max_{(z,z^{\prime}) \in \mathcal{Z}^2} \left\|z - z^\prime\right\|_{\mathbf{A}_t^{-1}} \leq 2 \max_{z^ \in \mathcal{Z}} \left\|z \right\|_{\mathbf{A}_t^{-1}}$, an admissible strategy is to pull edge-arms minimizing $\max_{z \in \mathcal{Z}} \left\|z \right\|_{\mathbf{A}_t^{-1}}$ in order to satisfy the stopping condition as soon as possible. More formally, one wants to find the sequence of edge-arms $\mathbf{z}_{mt}^\star = \left(z_1^\star, \dots, z_{mt}^\star\right)$ such that:
\begin{align}
    \label{eq:G_allocation_strategy}
    \mathbf{z}_{mt}^{\star} \in \argmin_{\left(z_1, \dots, z_{mt}\right)} \max_{z^\prime \in \mathcal{Z}} \; {z^\prime}^\top \left(\sum_{i=1}^{mt}z_iz_i^\top\right)^{-1}{z^\prime} \enspace.
    \tag{G-opt-$\mathcal{Z}$}
\end{align}
This is known as \emph{G-allocation} (see \eg \citet{Pukelsheim2006, SoareNIPS2014}) and is NP-hard to compute \citep{CivrilNPHard,WelchNPHard}. One way to find an approximate solution is to rely on a convex relaxation of the optimization problem \eqref{eq:G_allocation_strategy} and first compute a real-valued allocation $\lambda^\star \in \mathcal{S}_{\mathcal{Z}}$ such that
\begin{align}
\label{eq:relaxed_G_allocation_Z}
   \lambda^\star \in \argmin_{\lambda \in \mathcal{S}_{\mathcal{Z}}}\max_{z^\prime\in \mathcal{Z}} {z^\prime}^\top\left(\sum_{z \in \mathcal{Z}} \lambda_z z z^\top\right)^{-1}z^\prime \enspace. 
   \tag{G-relaxed-$\mathcal{Z}$}
\end{align} 
One could either use random sampling to draw edge-arms as i.i.d.\ samples from the $\lambda^\star$ distribution or rounding procedures to efficiently convert each $\lambda^\star_z$ into an integer. However, these methods do not take into account the graphical structure of the problem, and at a given round, the $m$ chosen edge-arms may result in two different assignments for the same node. Therefore, random sampling or rounding procedures cannot be straightforwardly used to select edge-arms in $\mathcal{Z}$.
Nevertheless, \eqref{eq:relaxed_G_allocation_Z} still gives a valuable information on the number of times, in proportion, each edge-arm $z \in \mathcal{Z}$ must be allocated to the graph. In the next section, we present an algorithm satisfying both the proportion requirements and the graphical constraints.

\subsection{Random Allocation over the Nodes}
\label{sec:mu_to_lambda}

Our algorithm is based on a randomized method directly allocating node-arms to the nodes and thus avoiding the difficult task of choosing edge-arms and trying to allocate them to the graph while ensuring that every node has an unique assignment. The validity of this random allocation is based on Theorem \ref{theorem:mu_to_lambda} below showing that one can draw node-arms in $\mathcal{X}$ and allocate them to the graph such that the associated edge-arms follow the probability distribution $\lambda^\star$ solution of \eqref{eq:relaxed_G_allocation_Z}.

\begin{theorem}
\label{theorem:mu_to_lambda}
Let $\mu^\star$ be a solution of the following optimization problem: 
\begin{align}
    \min_{\mu \in \mathcal{S}_{\mathcal{X}}} \max_{x^\prime\in \mathcal{X}} {x^\prime}^\top\left(\sum_{x \in \mathcal{X}} \mu_x x x^\top\right)^{-1}x^\prime \label{eq:relaxed_G_allocation_X} \enspace. \tag{G-relaxed-$\mathcal{X}$}
\end{align}
Let $\lambda^\star \in \mathcal{S}_{\mathcal{Z}}$ be defined for all $z = \vect{\left(xx^{\prime\top}\right)} \in \mathcal{Z}$ by $\lambda_z^\star = \mu^\star_x \mu^\star_{x^\prime}$. Then, $\lambda^\star$ is a solution of \eqref{eq:relaxed_G_allocation_Z}.
\end{theorem}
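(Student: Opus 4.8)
The plan is to exploit the Kronecker structure hidden in the edge-arms so as to reduce the $d^2$-dimensional design problem \eqref{eq:relaxed_G_allocation_Z} to the $d$-dimensional one \eqref{eq:relaxed_G_allocation_X}. First I would rewrite each edge-arm as a Kronecker product: with the column-stacking convention, $\vect(uv^\top)=v\otimes u$, so every $z=\vect(xx^{\prime\top})\in\mathcal{Z}$ equals $x^\prime\otimes x$ and hence $zz^\top=(x^\prime x^{\prime\top})\otimes(xx^\top)$. Writing $\mathbf{V}(\mu)=\sum_{x\in\mathcal{X}}\mu_x xx^\top$, bilinearity of the Kronecker product then yields, for the product weights $\lambda^\star_z=\mu^\star_x\mu^\star_{x^\prime}$,
\[
\sum_{z\in\mathcal{Z}}\lambda^\star_z\, zz^\top=\sum_{(x,x^\prime)\in\mathcal{X}^2}\mu^\star_{x^\prime}\mu^\star_x\,(x^\prime x^{\prime\top})\otimes(xx^\top)=\mathbf{V}(\mu^\star)\otimes\mathbf{V}(\mu^\star).
\]

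Next I would compute the objective of \eqref{eq:relaxed_G_allocation_Z} at $\lambda^\star$. Using $(\mathbf{A}\otimes\mathbf{B})^{-1}=\mathbf{A}^{-1}\otimes\mathbf{B}^{-1}$ together with $(a\otimes b)^\top(\mathbf{C}\otimes\mathbf{D})(a\otimes b)=(a^\top\mathbf{C}a)(b^\top\mathbf{D}b)$, writing an arbitrary $z^\prime=\vect(yy^{\prime\top})=y^\prime\otimes y$ gives
\[
{z^\prime}^\top\big(\mathbf{V}(\mu^\star)\otimes\mathbf{V}(\mu^\star)\big)^{-1}z^\prime=\|y\|^2_{\mathbf{V}(\mu^\star)^{-1}}\,\|y^\prime\|^2_{\mathbf{V}(\mu^\star)^{-1}}.
\]
Maximizing over $z^\prime\in\mathcal{Z}$, that is over $(y,y^\prime)\in\mathcal{X}^2$, the product separates, so the value of \eqref{eq:relaxed_G_allocation_Z} at $\lambda^\star$ equals $\big(\max_{y\in\mathcal{X}}\|y\|^2_{\mathbf{V}(\mu^\star)^{-1}}\big)^2$, i.e. the square of the optimal value of \eqref{eq:relaxed_G_allocation_X}.

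The crux, and the step I expect to be the main obstacle, is that $\lambda^\star$ ranges only over \emph{product} distributions, a strict subset of the full simplex $\mathcal{S}_{\mathcal{Z}}$; optimality among product distributions does not by itself give optimality over all of $\mathcal{S}_{\mathcal{Z}}$. I would close this gap by pinning down the absolute optimal values rather than comparing designs directly. A trace–averaging argument shows that for any $\lambda\in\mathcal{S}_{\mathcal{Z}}$ with invertible $\mathbf{V}=\sum_z\lambda_z zz^\top$,
\[
\max_{z^\prime\in\mathcal{Z}}{z^\prime}^\top\mathbf{V}^{-1}z^\prime\ \ge\ \sum_{z\in\mathcal{Z}}\lambda_z\, z^\top\mathbf{V}^{-1}z=\Tr\!\big(\mathbf{V}^{-1}\mathbf{V}\big)=d^2,
\]
where $\Tr(\mathbf{I}_{d^2})=d^2$ uses that $\mathcal{Z}$ spans $\mathbb{R}^{d^2}$, which follows from the assumption that $\mathcal{X}$ spans $\mathbb{R}^d$ (the tensors $x_i\otimes x_j$ of a spanning family span the product space). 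Thus the optimum of \eqref{eq:relaxed_G_allocation_Z} is at least $d^2$. On the other hand, by the Kiefer–Wolfowitz equivalence theorem applied in $\mathbb{R}^d$, the optimal value of \eqref{eq:relaxed_G_allocation_X} equals exactly $d$, so the value at $\lambda^\star$ computed above is precisely $d^2$. Since $\lambda^\star$ is feasible for \eqref{eq:relaxed_G_allocation_Z} and attains the lower bound $d^2$, it is a minimizer, which proves the claim.
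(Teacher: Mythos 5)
Your proof is correct and follows essentially the same route as the paper's: both exploit the Kronecker structure to show that the covariance of the product weights factors as $\mathbf{V}(\mu^\star)\otimes\mathbf{V}(\mu^\star)$, deduce that the objective of \eqref{eq:relaxed_G_allocation_Z} at $\lambda^\star$ equals the square of the objective of \eqref{eq:relaxed_G_allocation_X} at $\mu^\star$, and conclude by pinning both optimal values ($d$ and $d^2$) via the Kiefer--Wolfowitz equivalence theorem. The only deviation is minor: where the paper invokes Kiefer--Wolfowitz a second time to assert $\min_{\lambda\in\mathcal{S}_{\mathcal{Z}}}\max_{z\in\mathcal{Z}} z^\top\left(\sum_{z'\in\mathcal{Z}}\lambda_{z'}z'z'^{\top}\right)^{-1}z = d^2$, you re-derive the lower bound $\geq d^2$ directly by the trace-averaging argument, which is just the elementary direction of that same theorem.
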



\paragraph{Sketch of proof.} The objective at the optimum in \eqref{eq:relaxed_G_allocation_X} and \eqref{eq:relaxed_G_allocation_Z} are respectively equal to $d$ and $d^2$ which is the dimension of their respective problem, a result known as the Equivalence Theorem \citep{KieferEquivalenceThm1960}. Thus, by multiplying the optimum value of $\eqref{eq:relaxed_G_allocation_X}$ by itself, we can show that for all $z\in \mathcal{Z}$ where $z = \vect{\left(xx^{\prime\top}\right)}$ with $(x, x^\prime) \in \mathcal{X}^2$, $\lambda^\star_z$ can be written as the product $\mu_x^\star \mu_{x^\prime}^{\star}$. We refer 
to the Appendix \ref{sec:supp_parameter_estimation} for the detailed proof.

This theorem implies that, at each round $t > 0$ and each node $i \in V$, if $x_t^{(i)}$ is drawn from $\mu^{\star}$, then for all pairs of neighbors $(i,j) \in E$ the probability distribution of the associated edge-arms $z^{(i,j)}_t$ follows $\lambda^\star$. Moreover, as $\mu^\star$ is a distribution over the node-arm set $\mathcal{X}$, $\lambda^\star$ is a joint (product) probability distribution on $\mathcal{X}^2$ with marginal $\mu^\star$.

We apply the Frank-Wolfe algorithm \citep{frank1956algorithm} to compute the solution $\mu^\star$ of \eqref{eq:relaxed_G_allocation_X}, as it is more suited to optimization tasks on the simplex than projected gradient descent. Although we face a min-max optimization problem, we notice that the function $h(\mu) = \max_{x^\prime\in \mathcal{X}} x^{\prime\top}\left(\sum_{x \in \mathcal{X}} \mu_x xx^\top \right)^{-1}x^\prime$ is convex. We refer the reader to Appendix~\ref{sec:supp_experiments} and references therein for a proof on the convexity of $h$ and a discussion about using Frank-Wolfe for solving \eqref{eq:relaxed_G_allocation_X}.

Given the characterization in Theorem~\ref{theorem:mu_to_lambda} and our objective to verify the stopping condition in \eqref{eq:stopping_condition}, we present our sampling procedure in Algorithm~\ref{algorithm:Randomized_G_allocation_for_GBB}. We also note that at each round the sampling of the node-arms can be done in parallel.
\begin{algorithm}[h]
    \SetKwInOut{Input}{Input}\SetKwInOut{Output}{Output}
    \SetAlgoLined
    \Input{graph $\mathcal{G} = (V,E)$, arm set $\mathcal{X}$}
    Set $A_0 = I$ ; $b_0 = 0$ ; $t = 1$;
    
    Apply the Frank-Wolfe algorithm to find $\mu^\star$ solution of~\eqref{eq:relaxed_G_allocation_X}.
    
    \While{stopping condition \eqref{eq:stopping_condition} is not verified}{
        \color{blue} // Sampling the node-arms \color{black}
        
        Draw $x_t^{(1)}, \ldots, x_t^{(n)} \stackrel{\mathrm{iid}}{\sim} \mu^{\star}$ and obtain for all $(i,j)$ in $E$ the rewards $r_t^{(i,j)}$; 
        
        \color{blue} // Estimating $\hat{\theta}_t$ with the associated edge-arms \color{black}
        
        $\mathbf{A}_t = \mathbf{A}_{t-1} + \sum_{(i,j) \in E} z_{t}^{(i,j)}z_{t}^{(i,j)\top}$;
        
        $b_t = b_{t-1} + \sum_{(i,j)\in E} z_{t}^{(i,j)} r_t^{(i,j)}$;
        
        $\hat{\theta}_t = \mathbf{A}_t^{-1}b_t$ 
        
        $t \leftarrow t + 1$;
    }
    return $\hat{\theta}_t$
    
    \caption{Randomized G-Allocation strategy for Graphical Bilinear Bandits}
    \label{algorithm:Randomized_G_allocation_for_GBB}
\end{algorithm}

This sampling procedure implies that each edge-arm follows the optimal distribution $\lambda^\star$. However, if we take the number of times each $z \in \mathcal{Z}$ appears in the $m$ pulled edge-arms of a given round, we might notice that the observed proportion is not close to $\lambda^\star_z$, regardless of the size of $m$. This is due to the fact that the $m$ edge-arms are not independent because of the graph structure (\lcf Section~\ref{sec:variance}). Conversely, since each group of $m$ edge-arms are independent from one round to another, the proportion of each $z \in \mathcal{Z}$ observed among the $mt$ pulled edge-arms throughout $t$ rounds is close to $\lambda^\star_z$.

One may wonder if deterministic rounding procedures could be used instead of random sampling on $\mu^\star$, as it is done in many standard linear bandit algorithms \citep{SoareNIPS2014, tanner2019transductive}. Applying rounding procedure on $\mu^\star$ gives the number of times each node-arm $x \in \mathcal{X}$ should be allocated to the graph. However, it does not provide the actual allocations that the learner must choose over the $t$ rounds to optimally pull the associated edge-arms (\emph{i.e.,}~pull edge-arms following $\lambda^\star$). Thus, although rounding procedures give a more precise number of times each node-arm should be pulled, the problem of allocating them to the graph remains open, whereas by concentration of the measure, randomized sampling methods imply that the associated edge-arms follow the optimal probability distribution $\lambda^\star$.
In this paper, we present a simple and standard randomized G-allocation strategy, but other more elaborated methods could be considered, as long as they include the necessary randomness.


\textbf{On the choice of the G-allocation problem. } We have considered the G-allocation optimization problem \eqref{eq:G_allocation_strategy}, however, one could want to directly minimize $\max_{(z,z^{\prime}) \in \mathcal{Z}^2} \left\|z - z^\prime\right\|_{\mathbf{A}_t^{-1}}$, known as the XY-allocation \citep{SoareNIPS2014, tanner2019transductive}. 
Hence, one may want to construct edge-arms that follow the distribution $\lambda^{\star}_{\mathrm{XY}}$ solution of the relaxed XY-allocation problem:
\begin{align*}
    \min_{\lambda} \max_{z^\prime,z^{\prime\prime}} \; \left(z^\prime - z^{\prime\prime}\right)^\top\left(\sum_{z\in \mathcal{Z}} \lambda_{z} zz^\top\right)^{-1}\left(z^\prime - z^{\prime\prime}\right) \enspace.
\end{align*}
Although efficient in the linear case, this approach outputs a distribution 
$\lambda^{\star}_{\mathrm{XY}}$ which is not a joint probability distribution of two independent random variables, and so cannot be decomposed as the product of its marginals. Hence, there is no algorithm that allocates \emph{identically} and \emph{independently} the nodes of the graph to create edge-arms following $\lambda^\star_{\mathrm{XY}}$. Thus, we will rather deal with the upper bound given by the G-allocation as it allows sampling over the nodes.

\textbf{Static design versus adaptive design.}
Adaptive designs as proposed for example in \citet{SoareNIPS2014} and \citet{tanner2019transductive} provide a strong improvement over static designs in the case of linear
bandits. In our particular setting however, it is crucial to be able to adapt the edge-arms sampling rule to the node-arms, which is possible thanks to Theorem~\ref{theorem:mu_to_lambda}. This result requires a set of edge-arms $\mathcal{Z}$ expressed as a product of node-arms set $\mathcal{X}$. Extending the adaptive design of \citet{tanner2019transductive} to our setting would eliminate edge-arms from $\mathcal{Z}$ at each phase, without trivial guarantees that the newly obtained edge-arms set $\mathcal{Z}^\prime \subset \mathcal{Z}$ could still be derived from another node-arms set $\mathcal{X}^\prime \subset \mathcal{X}$. An adaptive approach is definitely a natural and promising extension of our method, and is left for future work.


\subsection{Convergence Analysis}
\label{sec:convergence-analysis}

We now prove the validity of the random sampling procedure detailed in Algorithm~\ref{algorithm:Randomized_G_allocation_for_GBB} by controlling the quality of the approximation $\max_{z \in \mathcal{Z}} z^{ \top} \mathbf{A}_t^{-1} z$ with respect to the optimum of the G-allocation optimization problem $\max_{z^\prime\in \mathcal{Z}} {z^\prime}^\top\left(\sum_{i = 1}^{mt} z_i^\star z_i^{\star\top}\right)^{-1}z^\prime$ described in \eqref{eq:G_allocation_strategy}. As is usually done in the optimal design literature (see \eg \citet{Pukelsheim2006,SoareNIPS2014,SagnolPhd2010}) we bound the relative error $\alpha$:
\[
\max_{z \in \mathcal{Z}} z^{ \top} \mathbf{A}_t^{-1} z \leq (1+\alpha)\max_{z^\prime\in \mathcal{Z}} {z^\prime}^\top\left(\sum_{i = 1}^{mt} z_i^\star z_i^{\star\top}\right)^{-1}z^\prime \enspace.
\]
Our analysis relies on several results from matrix concentration theory. One may refer for instance to \citet{tropp2015introduction} and references therein for an extended introduction on that matter. We first introduce a few additional notations.

Let $f_{\mathcal{Z}}$ be the function such that for any non-singular matrix $\mathbf{Q} \in \mathbb{R}^{d^2 \times d^2}$, $f_{\mathcal{Z}}(\mathbf{Q}) = \max_{z\in \mathcal{Z}} z^\top \mathbf{Q}^{-1} z$ and for any distribution $\lambda \in \mathcal{S}_{\mathcal{Z}}$ let $\Sigma_{\mathcal{Z}}(\lambda) \triangleq  \sum_{z \in \mathcal{Z}} \lambda_z zz^{\top}$ be the associated covariance matrix. Finally let $\mathbf{A}^{\star}_t = \sum_{i=1}^{mt} z_i^{\star}z_i^{\star\top}$ be the G-optimal design matrix constructed during $t$ rounds.

For $i \in \{1, \ldots,  n\}$ and $s \in \{1, \ldots, t \}$, let $X^{(i)}_{s}$ be \iid random vectors in $\mathcal{X}$ such that for all $x \in \mathcal{X}$, 
\[
    \mathbb{P}\left(X_1^{(1)} = x\right) = \mu^\star_x \enspace.
\]
Each $X^{(i)}_{s}$ is to be viewed as the random arm pulled at round $s$ for the node $i$. 
Using this notation, the random design matrix $\mathbf{A}_t$ can be defined as 
\begin{align*}
    \mathbf{A}_t = \sum_{s=1}^{t} \sum_{(i,j)\in E}  \vect{\left(X^{(i)}_{s}X^{(j)\top}_{s}\right)} \vect{\left(X^{(i)}_{s}X^{(j)\top}_{s}\right)}^\top \enspace.
\end{align*}
One can first observe that $f_{\mathcal{Z}}(\mathbf{A}_t)$ can be bounded by the following quantity:
\begin{align*}
    f_{\mathcal{Z}}(\mathbf{A}_t)
    &=   \max_{z \in \mathcal{Z}} \;z^\top \left(\mathbf{A}_t^{-1} - \left(\mathbb{E}\mathbf{A}_t\right)^{-1} + \left(\mathbb{E}\mathbf{A}_t\right)^{-1}\right) z\\
    &\leq \max_{z \in \mathcal{Z}}\; z^\top \left(\mathbf{A}_t^{-1} - \left(\mathbb{E}\mathbf{A}_t\right)^{-1}\right) z \\
    &\phantom{\leq}+ f_{\mathcal{Z}}\left(mt \Sigma_{\mathcal{Z}}(\lambda^\star)\right) \\
    &\leq \max_{z \in \mathcal{Z}}\;\|z\|^2 \|\mathbf{A}_t^{-1} - \left(\mathbb{E}\mathbf{A}_t\right)^{-1}\| \\
    &\phantom{\leq}+ f_{\mathcal{Z}}\left(\mathbf{A}^{\star}_t\right).
\end{align*}
Hence, one needs a bound on the maximum eigenvalue of $\mathbf{A}_t^{-1} - (\mathbb{E} \mathbf{A}_t)^{-1}$. Simple linear algebra leads to:
\[
  \mathbf{A}_t^{-1} - (\mathbb{E} \mathbf{A}_t)^{-1}
  = \mathbf{A}_t^{-1}( \mathbb{E}\mathbf{A}_t - \mathbf{A}_t)(\mathbb{E} \mathbf{A}_t)^{-1}.
\]
Thus, in addition to bounding the maximum eigenvalue of $\mathbf{A}_t^{-1}$, which is equal to the minimum eigenvalue of $\mathbf{A}_t$, we need a bound on $\|\mathbf{A}_t - \bbE\mathbf{A}_t\|$. 
It may be derived from concentration results on sum of random matrices derived in \citet{tropp2015introduction}.
We now state the result controlling the relative error obtained with our randomized sampling allocation. The proof can be found in the Appendix \ref{sec:supp_parameter_estimation}.

\begin{theorem}
    \label{theorem:convergence}
    Let $\lambda^\star$ be a solution of the optimization problem~\eqref{eq:relaxed_G_allocation_Z}. Let $0 \leq \delta \leq 1$ and let $t_0$ be such that
    \[
    t_0 = 2L d^2 \log(2d^2/\delta)/\nu_{\mathrm{min}},
    \]
    where $L = \max_{z \in \mathcal{Z}} \|z\|^2$ and $\nu_{\mathrm{min}}$ is the smallest eigenvalue of the covariance matrix$\frac{1}{K^{2}}\sum_{z\in\mathcal{Z}}zz^{\top}$.
    Then, at each round $t \geq t_0$ with probability at least $1 - \delta$, the randomized G-allocation strategy for graphical bilinear bandit in Algorithm~\ref{algorithm:Randomized_G_allocation_for_GBB} produces a matrix $\mathbf{A}_t$ such that:
    \begin{align*}
        f_{\mathcal{Z}}(\mathbf{A}_t) \leq (1 + \alpha) f_{\mathcal{Z}}(\mathbf{A}^{\star}_t)
    \end{align*}
    where 
    \begin{align*}
        \hspace{-.5em}\alpha = \frac{L d^2}{m\nu_{\mathrm{min}}^2} \sqrt{\frac{2v}{t}\log\left(\frac{2d^2}{\delta}\right)} + o\left(\frac{1}{\sqrt{t}}\right),
    \end{align*}
    and $v \triangleq \bbE\big[(\mathbf{A}_1 - \bbE\mathbf{A}_1)^2 \big]$.
\end{theorem}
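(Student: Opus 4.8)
The plan is to start from the decomposition already derived above,
\[
f_{\mathcal{Z}}(\mathbf{A}_t) \leq L\,\big\|\mathbf{A}_t^{-1} - (\mathbb{E}\mathbf{A}_t)^{-1}\big\| + f_{\mathcal{Z}}(\mathbf{A}_t^\star),
\]
with $L = \max_{z\in\mathcal{Z}}\|z\|^2$, and to control the middle term. Rewriting this as $f_{\mathcal{Z}}(\mathbf{A}_t) - f_{\mathcal{Z}}(\mathbf{A}_t^\star) \leq L\|\mathbf{A}_t^{-1}-(\mathbb{E}\mathbf{A}_t)^{-1}\|$ and dividing by $f_{\mathcal{Z}}(\mathbf{A}_t^\star)$ turns the claim into a relative-error bound, so it suffices to show that the right-hand side is at most $\alpha\,f_{\mathcal{Z}}(\mathbf{A}_t^\star)$. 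Using the resolvent identity $\mathbf{A}_t^{-1}-(\mathbb{E}\mathbf{A}_t)^{-1} = \mathbf{A}_t^{-1}(\mathbb{E}\mathbf{A}_t - \mathbf{A}_t)(\mathbb{E}\mathbf{A}_t)^{-1}$ together with submultiplicativity of the spectral norm,
\[
\big\|\mathbf{A}_t^{-1}-(\mathbb{E}\mathbf{A}_t)^{-1}\big\| \leq \big\|\mathbf{A}_t^{-1}\big\|\;\big\|\mathbf{A}_t - \mathbb{E}\mathbf{A}_t\big\|\;\big\|(\mathbb{E}\mathbf{A}_t)^{-1}\big\|,
\]
so the whole task reduces to bounding these three factors.

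The key observation is that each round contributes an i.i.d. matrix $\mathbf{Y}_s = \sum_{(i,j)\in E} z_s^{(i,j)} z_s^{(i,j)\top}$, so $\mathbf{A}_t = \sum_{s=1}^t \mathbf{Y}_s$ is a sum of $t$ i.i.d. positive semidefinite matrices with $\mathbb{E}\mathbf{A}_t = mt\,\Sigma_{\mathcal{Z}}(\lambda^\star)$ (invoking Theorem~\ref{theorem:mu_to_lambda}, which gives $\lambda_z^\star = \mu_x^\star\mu_{x'}^\star$ and hence $\mathbb{E}\mathbf{Y}_s = m\,\Sigma_{\mathcal{Z}}(\lambda^\star)$). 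The factor $\|(\mathbb{E}\mathbf{A}_t)^{-1}\| = 1/\lambda_{\min}(\mathbb{E}\mathbf{A}_t)$ is deterministic and scales like $1/(mt\,\nu_{\min})$. To bound $\|\mathbf{A}_t^{-1}\| = 1/\lambda_{\min}(\mathbf{A}_t)$ I would apply the matrix Chernoff inequality for the minimum eigenvalue of a sum of i.i.d. positive semidefinite matrices: since $\lambda_{\max}(\mathbf{Y}_s) \leq mL$, once $t \geq t_0 = 2Ld^2\log(2d^2/\delta)/\nu_{\min}$ the lower tail yields $\lambda_{\min}(\mathbf{A}_t) \geq \tfrac12\lambda_{\min}(\mathbb{E}\mathbf{A}_t)$ with probability at least $1-\delta/2$, whence $\|\mathbf{A}_t^{-1}\| \leq 2/\lambda_{\min}(\mathbb{E}\mathbf{A}_t)$; this is precisely where the burn-in threshold $t_0$ and $\nu_{\min}$ enter, through the per-round eigenvalue cap $mL$. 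For $\|\mathbf{A}_t - \mathbb{E}\mathbf{A}_t\|$ I would apply the matrix Bernstein inequality to the centered sum $\sum_{s}(\mathbf{Y}_s - \mathbb{E}\mathbf{Y}_s)$, whose variance proxy is $t\,v$ with $v = \big\|\mathbb{E}[(\mathbf{A}_1-\mathbb{E}\mathbf{A}_1)^2]\big\|$, giving $\|\mathbf{A}_t - \mathbb{E}\mathbf{A}_t\| \leq \sqrt{2tv\log(2d^2/\delta)} + O(\log(2d^2/\delta))$ with probability at least $1-\delta/2$.

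Combining the three bounds on the intersection of the two events (a union bound with $\delta$ split evenly keeps the total failure probability at $\delta$) yields a bound on $\|\mathbf{A}_t^{-1}-(\mathbb{E}\mathbf{A}_t)^{-1}\|$ whose leading contribution is of order $\sqrt{tv\log(2d^2/\delta)}\,/\,(mt\,\nu_{\min})^2$, that is $O(t^{-3/2})$. Dividing the absolute bound $L\|\mathbf{A}_t^{-1}-(\mathbb{E}\mathbf{A}_t)^{-1}\|$ by $f_{\mathcal{Z}}(\mathbf{A}_t^\star)$ converts it into the relative error; invoking the Equivalence Theorem to evaluate the optimal design value and retaining only the dominant $1/\sqrt t$ term reproduces the stated expression for $\alpha$, while the additive Bernstein correction and the factor-two Chernoff slack are absorbed into the $o(1/\sqrt t)$ remainder. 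The main obstacle is the control of $\|\mathbf{A}_t^{-1}\|$: the resolvent identity is useless unless $\mathbf{A}_t$ is invertible and well conditioned, so the whole argument hinges on the high-probability lower bound for $\lambda_{\min}(\mathbf{A}_t)$, which is exactly what forces the threshold $t_0$ and couples the per-round cap $mL$ with $\nu_{\min}$. The secondary care is bookkeeping — splitting $\delta$ across the Chernoff and Bernstein events and verifying that the sub-leading terms are genuinely $o(1/\sqrt t)$ rather than hidden contributions to $\alpha$.
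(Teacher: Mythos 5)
Your overall architecture coincides with the paper's proof: write $\mathbf{A}_t$ as a sum of $t$ i.i.d.\ per-round matrices with $\mathbb{E}\mathbf{A}_t = mt\,\Sigma_{\mathcal{Z}}(\lambda^\star)$ (via Theorem~\ref{theorem:mu_to_lambda}), use the resolvent identity and submultiplicativity to reduce the problem to the three factors $\|\mathbf{A}_t^{-1}\|$, $\|\mathbf{A}_t - \mathbb{E}\mathbf{A}_t\|$, $\|(\mathbb{E}\mathbf{A}_t)^{-1}\|$, control the first by a matrix Chernoff lower-tail bound (where $t_0$ enters) and the second by a matrix Bernstein bound, take a union bound with $\delta/2+\delta/2$, and convert to a relative error using the Kiefer--Wolfowitz value $d^2/(mt)$ together with $f_{\mathcal{Z}}(mt\,\Sigma_{\mathcal{Z}}(\lambda^\star)) \leq f_{\mathcal{Z}}(\mathbf{A}_t^\star)$. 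This is exactly the chain in Appendix~\ref{sec:supp_parameter_estimation}.

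There is, however, a genuine gap. You assert that $\|(\mathbb{E}\mathbf{A}_t)^{-1}\| = 1/\lambda_{\mathrm{min}}(\mathbb{E}\mathbf{A}_t)$ ``scales like $1/(mt\,\nu_{\mathrm{min}})$'', and you use the same identification inside the Chernoff step (via the cap $mL$ and $\nu_{\mathrm{min}}$). But $\mathbb{E}\mathbf{A}_t = mt\,\Sigma_{\mathcal{Z}}(\lambda^\star)$ is the covariance under the \emph{optimal design} $\lambda^\star$, whereas $\nu_{\mathrm{min}}$ is defined as the smallest eigenvalue of the \emph{uniform} covariance $\frac{1}{K^{2}}\sum_{z\in\mathcal{Z}}zz^{\top}$. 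There is no a priori inequality $\lambda_{\mathrm{min}}\left(\Sigma_{\mathcal{Z}}(\lambda^\star)\right) \gtrsim \nu_{\mathrm{min}}$: a G-optimal design typically puts zero mass on most arms, so its covariance spectrum can differ substantially from the uniform one. The paper bridges this with a dedicated result, Lemma~\ref{lemma:bound_cov_norm}, which proves $\left\|\Sigma_{\mathcal{Z}}(\lambda^\star)^{-1}\right\| \leq d^{2}/\nu_{\mathrm{min}}$; its proof is not a one-liner (unit vectors are expressed as combinations of arms through the Moore--Penrose pseudo-inverse, the $\ell_1$ norm of the coefficients is bounded by $K^{2}/\sigma_{\mathrm{min}}(\mathbf{Z})^{2}$, and Kiefer--Wolfowitz is invoked a second time). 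That $d^2$ factor is precisely what produces the threshold $t_0 = 2Ld^{2}\log(2d^{2}/\delta)/\nu_{\mathrm{min}}$ and the constant $Ld^{2}/(m\nu_{\mathrm{min}}^{2})$ in $\alpha$. Your own bookkeeping exposes the issue: carrying $\|(\mathbb{E}\mathbf{A}_t)^{-1}\| \approx 1/(mt\,\nu_{\mathrm{min}})$ through your chain yields a leading term $\frac{L}{d^{2}m\nu_{\mathrm{min}}^{2}}\sqrt{\frac{2v}{t}\log\frac{2d^{2}}{\delta}}$, a factor $d^{4}$ \emph{smaller} than the claimed $\alpha$, so your sketch does not in fact ``reproduce the stated expression'' --- the discrepancy is the footprint of the unjustified eigenvalue identification. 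To close the argument you must either prove the analogue of Lemma~\ref{lemma:bound_cov_norm} or find another route from $\Sigma_{\mathcal{Z}}(\lambda^\star)$ to $\nu_{\mathrm{min}}$; the rest of your proposal (Chernoff/Bernstein split, absorption of the additive Bernstein term and of the Chernoff slack into $o(1/\sqrt{t})$) matches the paper and is sound.
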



We have just shown that the approximation value $\max_{z \in \mathcal{Z}} z^{ \top} \mathbf{A}_t^{-1} z$ converges to the optimal value with a rate of $O\left(\sqrt{v}/(m\sqrt{t})\right)$. In Section~\ref{sec:variance}, we show that the best case graph implies a $v = O\left(m\right)$ matching the convergence rate $O\left(1/\sqrt{mt}\right)$ of a linear bandit algorithm using randomized sampling to pull $mt$ edge-arms without (graphical) constraints. Moreover, we will see that the worst case graph implies that $v = O\left(m^2\right)$.

Since we filled the gap between our constraint objective and the problem of best arm identification in linear bandits, thanks to Theorem~\ref{theorem:mu_to_lambda} and \ref{theorem:convergence}, we are able to extend known results for best arm identification in linear bandits on the sample complexity and its associated lower bound.

\begin{corollary}[\citet{SoareNIPS2014}, Theorem 1]
If the G-allocation is implemented with the random strategy of Algorithm~\ref{algorithm:Randomized_G_allocation_for_GBB}, resulting in an $\alpha$-approximation, then with probability at least $1 - \delta$, the best arm obtained with $\hat{\theta}_t$ is $z_\star$ and 
\begin{align}
    t \leq \frac{128\sigma^2 d^2 (1+ \alpha)\log{\left(\frac{6m^2t^2 K^4}{\delta \pi} \right)} }{m\Delta_{\mathrm{min}}^2} \enspace, \notag
\end{align}
where $\Delta_{\mathrm{min}} = \min_{z \in \mathcal{Z} \setminus \{z_\star\}} (z_\star - z)^\top \theta_\star$.
\end{corollary}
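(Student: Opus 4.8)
The plan is to transport the G-allocation sample-complexity argument of \citet{SoareNIPS2014} to the graphical bilinear setting, the point being that the only two ingredients that differ from the purely linear case have already been supplied by Theorem~\ref{theorem:mu_to_lambda} and Theorem~\ref{theorem:convergence}: the former guarantees that random node sampling produces edge-arms distributed according to the G-optimal design $\lambda^\star$, and the latter controls the relative error $\alpha$ between the realized design matrix $\mathbf{A}_t$ and the optimal one $\mathbf{A}^{\star}_t$. With these granted, I would first fix the good event on which the empirical gaps concentrate, then show that on this event the stopping condition \eqref{eq:stopping_condition} is both sound (it can only fire at $z_\star$) and reached within the claimed number of rounds.

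First I would isolate the concentration step underlying \eqref{eq:stopping_condition}. For the OLS estimate $\hat{\theta}_t$ and any fixed pair $(z,z')$, the $\sigma$-sub-Gaussian noise gives $\bigl| (\hat{\theta}_t - \theta_\star)^\top (z - z') \bigr| \le \|z - z'\|_{\mathbf{A}_t^{-1}}\sqrt{8\sigma^2 \ell}$ with the stated probability, writing $\ell = \log\!\left(6m^2t^2K^4/(\delta\pi^2)\right)$ for brevity; a union bound over the at most $K^4$ pairs together with the summation $\sum_t 1/t^2 = \pi^2/6$ over rounds (which is exactly what produces the argument $6m^2t^2K^4/(\delta\pi^2)$) makes this hold simultaneously for all pairs and all $t$ on an event $\mathcal{E}$ of probability at least $1-\delta$. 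On $\mathcal{E}$, if the stopping condition holds for an empirically best arm $\hat{z}$ against every $z'$, then $(\hat{z}-z')^\top\theta_\star \ge 0$ for all $z'$, forcing $\hat{z}=z_\star$; this establishes correctness.

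Next I would bound the stopping time. On $\mathcal{E}$ one has, for every $z\neq z_\star$,
\begin{align*}
\hat{\Delta}_t(z_\star,z) \;\ge\; \Delta_{\mathrm{min}} - \|z_\star - z\|_{\mathbf{A}_t^{-1}}\sqrt{8\sigma^2\ell},
\end{align*}
so the stopping condition for $z_\star$ is implied by $2\|z_\star-z\|_{\mathbf{A}_t^{-1}}\sqrt{8\sigma^2\ell}\le\Delta_{\mathrm{min}}$. Using $\|z_\star-z\|_{\mathbf{A}_t^{-1}}\le 2\max_{z\in\mathcal{Z}}\|z\|_{\mathbf{A}_t^{-1}} = 2\sqrt{f_{\mathcal{Z}}(\mathbf{A}_t)}$ and squaring turns this into $128\,\sigma^2 f_{\mathcal{Z}}(\mathbf{A}_t)\,\ell \le \Delta_{\mathrm{min}}^2$, the constant being $4^2\cdot 8$. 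It then remains to insert $f_{\mathcal{Z}}(\mathbf{A}_t)\le(1+\alpha)f_{\mathcal{Z}}(\mathbf{A}^{\star}_t)$ from Theorem~\ref{theorem:convergence} (intersecting $\mathcal{E}$ with its high-probability event by a union bound and absorbing the factor of two into constants), together with the value $f_{\mathcal{Z}}(\mathbf{A}^{\star}_t)=d^2/(mt)$, which follows from the Equivalence Theorem \citep{KieferEquivalenceThm1960} already used in Theorem~\ref{theorem:mu_to_lambda}: the relaxed optimum equals the dimension $d^2$, and the $mt$ edge-arms built over $t$ rounds scale the design by $mt$. Substituting and solving for $t$ yields precisely the stated bound (the appearance of $t$ inside $\ell$ being the usual benign self-reference, so the claim reads as an upper bound on the first round at which the stopping condition fires).

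The main obstacle is not any single inequality but the bookkeeping of the graphical structure, which is exactly what Theorems~\ref{theorem:mu_to_lambda} and~\ref{theorem:convergence} were built to absorb: one must be certain that the $m$ (dependent) edge-arms drawn per round legitimately contribute $m$ samples to $\mathbf{A}_t$, so that the effective horizon is $mt$ rather than $t$ and the $1/(mt)$ scaling of $f_{\mathcal{Z}}(\mathbf{A}^{\star}_t)$ is valid, and that despite this per-round dependence the realized design still concentrates around $mt\,\Sigma_{\mathcal{Z}}(\lambda^\star)$ at the rate encoded in $\alpha$. Once those two facts are in hand the remainder is the verbatim \citet{SoareNIPS2014} argument, so the only genuine care needed is to keep the factor $m$ in the denominator and to carry the approximation factor $(1+\alpha)$ untouched through the final inversion.
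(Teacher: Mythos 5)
Your proposal is correct and follows essentially the same route the paper intends for this corollary: the Appendix~\ref{sec:supp_stopping_condition} concentration argument behind the stopping condition \eqref{eq:stopping_condition}, the substitution $f_{\mathcal{Z}}(\mathbf{A}_t) \leq (1+\alpha)\,d^2/(mt)$ coming from Theorem~\ref{theorem:convergence} together with the Kiefer--Wolfowitz value of the relaxed design, and then solving for $t$ with the constant $128 = 4^2\cdot 8$ and the effective horizon $mt$ exactly as in \citet{SoareNIPS2014}. The only cosmetic imprecision is writing $f_{\mathcal{Z}}(\mathbf{A}^{\star}_t) = d^2/(mt)$ for the discrete optimum when equality strictly holds for the relaxed design $mt\,\Sigma_{\mathcal{Z}}(\lambda^\star)$ (the discrete value is only lower-bounded by $d^2/(mt)$), but this is the same substitution the paper's own convergence analysis makes, so the argument goes through unchanged.
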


Moreover, let $\tau$ be the number of rounds sufficient for any algorithm to determine the best arm with probability at least $1-\delta$. A lower bound on the expectation of $\tau$ can be obtained from the one derived for the problem of best arm identification in linear bandits (see \eg Theorem 1 in \citet{tanner2019transductive}):
\vspace{-0.15cm}
\[
    \mathbb{E}[\tau] 
    \geq \min_{\lambda \in \mathcal{S}_{\mathcal{Z}}} \max_{z \in \mathcal{Z}\setminus \{z_\star\}} \log\left(\frac{1}{2.4\delta}\right) \frac{2\sigma^2 \|z_\star - z\|^2_{\Sigma_{\mathcal{Z}}(\lambda)^{-1}}}{m\left(\left(z_\star - z\right)^{\top}\theta_\star\right)^2} \enspace.
\]
As observed in \citet{SoareNIPS2014} this lower bound can be upper bounded, in the worst case, by $4\sigma^2d^2/(m\Delta_{\mathrm{min}}^2)$ which matches our bound up to log terms and the relative error $\alpha$.

\section{Influence of the Graph Structure on $v$}
\label{sec:variance}
The convergence bound in Theorem~\ref{theorem:convergence} depends on $v = \bbE\big[(\mathbf{A}_1 - \bbE\mathbf{A}_1)^2 \big]$. In this section, we characterize the impact of the graph structure on this quantity and, by extension, on the convergence rate. First of all, recall that 
\begin{align*} 
\mathbf{A}_{1} = \sum_{(i,j)\in E} \vect{\left(X^{(i)}_{1}X^{(j)\top}_{1}\right)} \vect{\left(X^{(i)}_{1}X^{(j)\top}_{1}\right)}^\top \enspace.
\end{align*}
Let denote $\mathbf{A}_{1}^{(i,j)} = \vect{(X^{(i)}_{1}X^{(j)\top}_{1})} \vect{(X^{(i)}_{1}X^{(j)\top}_{1})}^\top$ such that $\mathbf{A}_{1} = \sum_{(i,j)\in E} \mathbf{A}_{1}^{(i,j)}$ and let define for any random matrices $\mathbf{A}$ and $\mathbf{B}$ the operators $\var(\mathbf{A})\triangleq {\bbE\big[(\mathbf{A} - \bbE[\mathbf{A}])^2 \big]}$ and $\cov(\mathbf{A}, \mathbf{B})\triangleq {\bbE\big[(\mathbf{A} - \bbE [\mathbf{A}])(\mathbf{B} - \bbE[\mathbf{B}]) \big]}$. We can derive the variance of $\mathbf{A}_1$ as follows: 
\begin{align}
   \var\left(\mathbf{A}_1\right) =& \sum_{(i,j)\in E} \var\left(\mathbf{A}^{(i,j)}_1\right) \notag\\ 
   &+ \sum_{(i,j)\in E} \sum_{\substack{(k,l)\in E\\ (k,l) \neq (i,j)}}\cov(\mathbf{A}^{(i,j)}_1, \mathbf{A}^{(k,l)}_1) \notag .
\end{align}
One can decompose the sum of the covariances into three groups: a first group where $k \neq i,j$ and $l \neq i,j$ which means that the two edges do not share any node and $\cov(\mathbf{A}^{(i,j)}_1, \mathbf{A}^{(k,l)}_1) = \mathbf{0}$, and two other groups where the edges share at least one node. For all edges $(i,j) \in E$ we consider either the edges $(i,k) \in E$ where $k \neq j$, yielding $\cov(\mathbf{A}^{(i,j)}_1, \mathbf{A}^{(i,k)}_1)$ or the edges $(j,k) \in E$, yielding $\cov(\mathbf{A}^{(i,j)}_1, \mathbf{A}^{(j,k)}_1)$.
\begin{figure*}

\centering
\subfloat{\label{fig:d}
\centering
\includegraphics[width=0.47\linewidth]{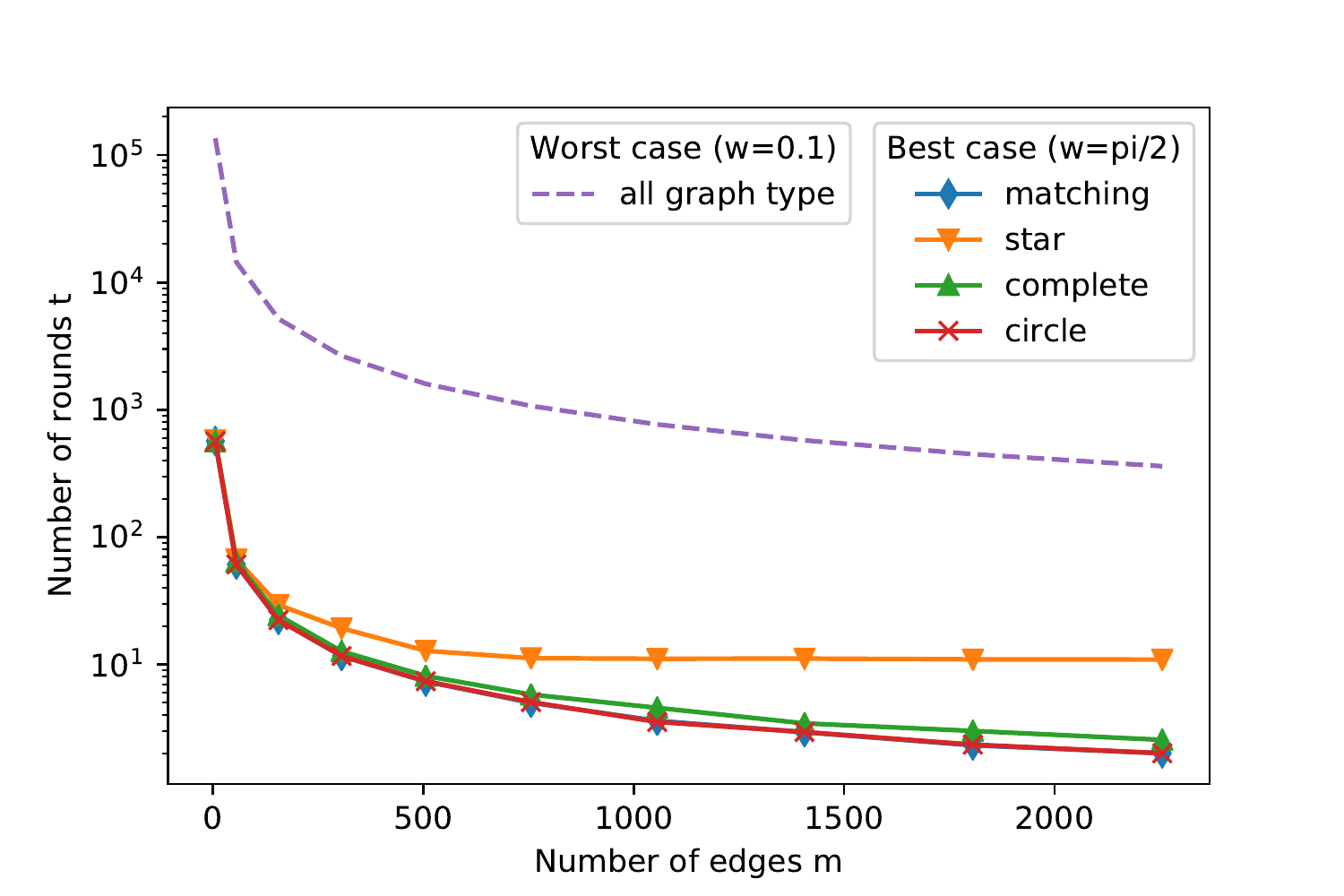}
}
\hfill
\subfloat{\label{fig:m}
\centering

\includegraphics[width=0.47\linewidth]{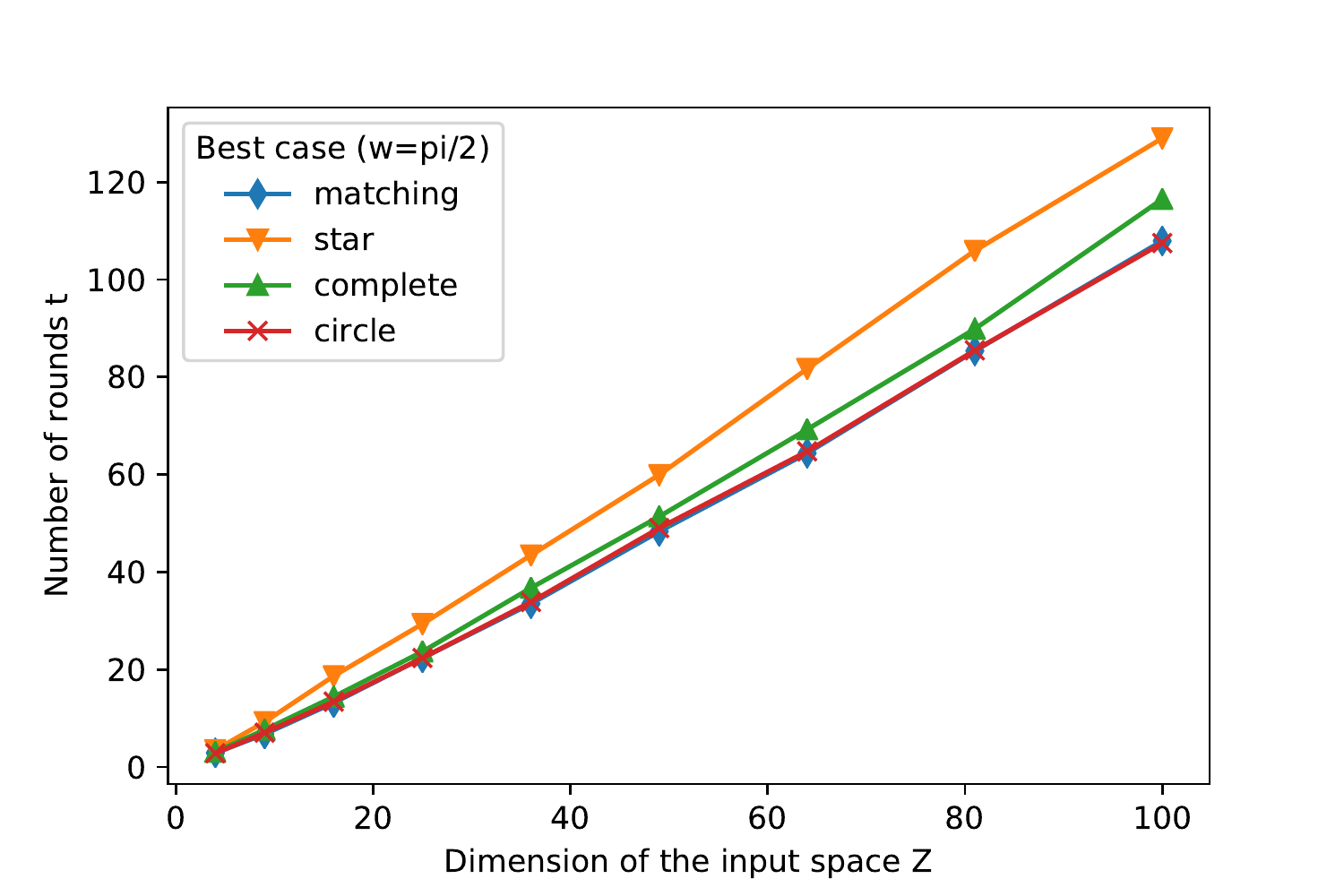}
}

\caption{Number of rounds $t$ needed to verify the stopping condition \eqref{eq:stopping_condition} with respect to  \textbf{left:} the number of edges $m$ where the dimension of the edge-arm space $\mathcal{Z}$ is fixed and equal to $25$ and \textbf{right:} the dimension of the edge-arm space $\mathcal{Z}$  where the number of edges is fixed and equal to $156$. For both experiments we run 100 times and plot the average number of rounds needed to verify the stopping condition.}
\label{fig:exp_d_and_m}
\end{figure*}

Hence, one has
\begin{align}
    \var\left(\mathbf{A}_1\right) =& \sum_{(i,j)\in E} \var\left(\mathbf{A}^{(i,j)}_1\right) \notag\\ 
    &+ \sum_{i=1}^{n} \sum_{j \in \mathcal{N}(i)} \sum_{\substack{k \in \mathcal{N}(i)\\ k\neq j}}\cov(\mathbf{A}^{(i,j)}_1, \mathbf{A}^{(i,k)}_1) \notag \\
    &+ \sum_{i=1}^{n} \sum_{j \in \mathcal{N}(i)} \sum_{k \in \mathcal{N}(j)}\cov(\mathbf{A}^{(i,j)}_1, \mathbf{A}^{(j,k)}_1) \notag \enspace.
\end{align}

Let $P \geq 0$ be such that for all $(i,j) \in E$,  $\var\left(\mathbf{A}^{(i,j)}_1\right) \preceq P\times \mathbf{I}$ and $M, N \geq 0$ such that for all $(i,j) \in E$:
\begin{align*}
&\forall k \in \mathcal{N}(i),  \cov(\mathbf{A}^{(i,j)}_1, \mathbf{A}^{(i,k)}_1) \preceq M \times \mathbf{I} \\
&\forall k \in \mathcal{N}(j), \cov(\mathbf{A}^{(i,j)}_1, \mathbf{A}^{(j,k)}_1) \preceq N \times \mathbf{I}
\end{align*}

We want to compare the quantity $\|\var(\mathbf{A}_1)\|$ for different types of graphs: star, complete, circle and a matching graph. To have a fair comparison, we want graphs that reveal the same number of rewards at each round of the learning procedure. Hence, we denote respectively $n_{\mathrm{S}}$, $n_{\mathrm{Co}}$, $n_{\mathrm{Ci}}$ and $n_{\mathrm{M}}$ the number of nodes in a star, complete, circle and matching graph of $m$ edges and get:

\textbf{Star graph:} 
\begin{align*} 
\|\var(\mathbf{A}_1)\| \leq m P + n_{\mathrm{S}}^2 (M + N).
\end{align*}
\textbf{Complete graph:}
\begin{align*} 
\|\var(\mathbf{A}_1)\| \leq m P + n_{\mathrm{Co}}^3( M + N).
\end{align*}
\textbf{Circle graph:}
\begin{align*}
    \|\var(\mathbf{A}_1)\| \leq m P + n_{\mathrm{Ci}}(2M + 4N).
\end{align*}
\textbf{Matching graph:}
\begin{align*}
    \|\var(\mathbf{A}_1)\| \leq m P + n_{\mathrm{M}} N.
\end{align*}

We refer the reader to Appendix \ref{sec:supp_variance} for more details on the given upper bounds.
Since the star (respectively, complete, circle and matching) graph of $m$ edges has a number of nodes $n_{\mathrm{S}} = m/2 + 1$ (respectively $n_{\mathrm{Co}} = \left(1 + \sqrt{4m +1}\right)/2$, $n_{\mathrm{Ci}} = m/2$ and $n_{\mathrm{M}} = m$), we obtain the bounds stated in Table~\ref{table:convergence_type_graph}.

\begin{table}[h]\centering
\ra{1.3}
\begin{tabular}{@{}lcc@{}}\toprule
    Graph &  Upper bound on $\|\var{\left(\mathbf{A}_1\right)}\|$ & $\alpha$  \\ \midrule
    Star & $mP + (M + N) O\left(m^2\right)$ & $O\left(1/\sqrt{t}\right)$\\ 
    Complete & $mP + (M + N)O\left(m\sqrt{m}\right)$ & $O\left(1/\left(m^{\frac{1}{4}}\sqrt{t}\right)\right)$\\
    Circle & $mP + (M + N)O\left(m\right)$ & $O\left(1/\sqrt{mt}\right)$\\
    Matching & $mP + mN$ & $O\left(1/\sqrt{mt}\right)$\\\bottomrule
\end{tabular}
\caption{Upper bound on the variance and convergence rate of Algorithm~\ref{algorithm:Randomized_G_allocation_for_GBB} for the star, complete, circle and matching graph with respect to the number of edges $m$ and the number of rounds $t$.}
\label{table:convergence_type_graph}
\end{table}


 
These four examples evidence the strong dependency of the variance on the structure of the graph. The more independent the edges are (\ie with no common nodes), the smaller the quantity $\|\var(\mathbf{A}_1)\|$ is. For a fixed number of edges $m$, the best case is the matching graph where no edge share the same node and the worst case is the star graph where all the edges share a central node.

\section{Experiments}
\label{sec:expe}

In this section, we consider the modified version of a standard experiment introduced by \citet{SoareNIPS2014} and used in most papers on best arm identification in linear bandits \citep{XuAISTATS2018,TaoICML2018,tanner2019transductive,Zaki2019GLUCB} to evaluate the sample complexity of our algorithm on different graphs.
We consider $d+1$ node-arms in $\mathcal{X} \subset \mathbb{R}^d$ where $d \geq 2$. This node-arm set is made of the $d$ vectors $(\mathbf{e}_1, \dots, \mathbf{e}_d)$ forming the canonical basis of $\mathbb{R}^d$ and one additional arm $x_{d+1} = (\cos(\omega), \sin(\omega), 0, \dots, 0)^{\top}$ with $\omega \in ]0,\pi/2]$. Note that by construction, the edge-arm set $\mathcal{Z}$ contains the canonical basis $(\mathbf{e}^\prime_1, \dots, \mathbf{e}^\prime_{d^2})$ of $\mathbb{R}^{d^2}$. The parameter matrix $\mathbf{M}_{\star}$ has its first coordinate equal to 2 and the others equal to 0 which makes $\theta_\star = \vect{\left(\mathbf{M}_\star\right)} = \left(2, 0, \dots, 0\right)^\top \in \mathbb{R}^{d^2}$. The best edge-arm is thus $z_\star = z^{(1,1)} = \mathbf{e}^\prime_1$. One can note that when $\omega$ tends to 0, it is harder to differentiate this arm from $z^{(d+1,d+1)} = \vect{\left(x_{(d+1)}x_{(d+1)}^\top\right)}$than from the other arms. We set $\eta^{(i,j)}_t \sim \mathcal{N}(0,1)$, for all edges $(i,j)$ and round $t$.

We consider the two cases where $\omega=0.1$ which makes the edge-arms $z^{(1,1)}$ and $z^{(d+1,d+1)}$ difficult to differentiate, and $\omega=\pi/2$ which makes the edge-arm $z^{(1,1)}$ easily identifiable as the optimal edge-arm.
For each of these two cases, we evaluate the influence of the graph structure, the number of edges $m$ and the edge-arm space dimension $d^2$ on the sampling complexity. Results are shown in Figure~\ref{fig:exp_d_and_m}. 

When $\omega = 0.1$, the type of the graph does not impact the number of rounds needed to verify the stopping condition. This is mainly due to the fact that the magnitude of its associated variance is negligible with respect to the number of rounds. Hence, even if we vary the number of edges or the dimension, we get the same performance for any type of graph including the matching graph. This implies that our algorithm performs as well as a linear bandit that draws $m$ edge-arms in parallel at each round. When $\omega=\pi/2$, the number of rounds needed to verify the stopping condition is smaller and the magnitude of the variance is no longer negligible. Indeed, when the number of edges or the dimension increases, we notice that the star graph takes more times to satisfy the stopping condition. Moreover, note that the sample complexities obtained for the circle and the matching graph are similar. This observation is in line with the dependency on the variance shown in Table~\ref{table:convergence_type_graph}.

\section{Conclusion}
We introduced a new graphical bilinear bandit setting and studied the best arm identification problem with a fixed confidence. This problem being NP-Hard even with the knowledge of the true parameter matrix $\mathbf{M}^\star$, we first proposed an algorithm that provides a 1/2-approximation. Then, we provided a second algorithm, based on G-allocation strategy, that uses randomized sampling over the nodes to return a good estimate $\hat{\mathbf{M}}$ that can be used instead of $\mathbf{M}_\star$.
Finally, we highlighted the impact of the graph structure on the convergence rate of our algorithm and validated our theoretical results with experiments. Promising extensions of the model include considering unknown parameters $\mathbf{M}_\star^{(i,j)}$, different for each edge $(i,j)$ of the graph, and investigating XY-allocation strategies.

\section*{Acknowledgements}

We thank anonymous reviewers, whose comments helped
us improve the paper significantly.

\newpage

\bibliography{biblio}

\onecolumn
\newpage
\appendix
\section{An NP-Hard Problem}
\label{sec:supp_finding_best_arm}
\subsection{Proof of Theorem~\ref{theorem:nphard}}
\begin{theorem}
Consider a given matrix $\mathbf{M}_\star \in \mathbb{R}^{d \times d}$ and a finite arm set $\mathcal{X} \subset \mathbb{R}^d$. Unless P=NP, there is no polynomial time algorithm guaranteed to find the optimal solution of 
\begin{align*}
\max_{\left(x^{(1)}, \ldots, x^{(n)}\right) \in\mathcal{X}^n} \sum_{(i,j)\in E} x^{(i)\top}\mathbf{M}_\star \ x^{(j)}\enspace.
\end{align*}
\end{theorem}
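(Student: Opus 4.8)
The plan is to exhibit a polynomial-time reduction from Max-Cut, which is NP-hard, to the maximization problem in the statement, keeping the dimension $d$, the arm set $\mathcal{X}$, and the matrix $\mathbf{M}_\star$ \emph{fixed} while letting the coordination graph vary; this is what makes the hardness a genuine statement about the number of nodes $n$. Given a Max-Cut instance on an undirected graph, I would build a graphical bilinear bandit whose coordination graph $\mathcal{G}$ has the same vertices and, for each undirected edge, both directed edges $(i,j)$ and $(j,i)$, as the model requires.

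For the gadget I would work in $d=2$ with the two node-arms $\mathcal{X}=\{\mathbf{e}_1,\mathbf{e}_2\}$ and the symmetric matrix $\mathbf{M}_\star = \left(\begin{smallmatrix} 0 & 1/2 \\ 1/2 & 0 \end{smallmatrix}\right)$, chosen so that the three possible edge values are $\mathbf{e}_1^\top \mathbf{M}_\star \mathbf{e}_1 = \mathbf{e}_2^\top \mathbf{M}_\star \mathbf{e}_2 = 0$ and $\mathbf{e}_1^\top \mathbf{M}_\star \mathbf{e}_2 = \mathbf{e}_2^\top \mathbf{M}_\star \mathbf{e}_1 = 1/2$. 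Here $\mathcal{X}$ spans $\mathbb{R}^2$ as the preliminaries require, and the construction is independent of $n$. The intuition is that same-arm edges earn nothing while opposite-arm edges earn a fixed positive reward, so the objective will reward precisely the ``cut'' edges.

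Since $\mathcal{X}$ contains only two arms, any joint allocation $(x^{(1)},\dots,x^{(n)})$ is exactly a bipartition $(V_1,V_2)$, with $V_1$ (resp. $V_2$) collecting the nodes assigned $\mathbf{e}_1$ (resp. $\mathbf{e}_2$). I would then compute the global reward: because $\mathbf{M}_\star$ is symmetric, each undirected edge contributes $2\,x^{(i)\top}\mathbf{M}_\star x^{(j)}$, which equals $1$ when its endpoints lie on opposite sides of $(V_1,V_2)$ and $0$ otherwise. Hence the objective value equals the number of edges cut by $(V_1,V_2)$, so maximizing it is identical to solving Max-Cut on the original graph.

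The final step is the conclusion: a polynomial-time algorithm solving the bandit maximization exactly would, through this reduction, compute an optimal Max-Cut in polynomial time, forcing P $=$ NP. I do not anticipate a serious obstacle; the only points requiring care are verifying the gadget arithmetic, correctly accounting for the factor $2$ coming from the directed-edge convention, and stressing that because $d$, $\mathcal{X}$ and $\mathbf{M}_\star$ are held fixed across the reduction, the resulting hardness is with respect to the number of nodes $n$ and not an artifact of growing dimension.
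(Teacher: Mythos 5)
Your proof is correct and takes essentially the same approach as the paper's: a reduction from Max-Cut using the two canonical basis vectors of $\mathbb{R}^2$ as the arm set and an off-diagonal symmetric matrix, so that an allocation is a bipartition and the objective equals the induced cut value. The only cosmetic difference is your $1/2$ scaling of $\mathbf{M}_\star$, which makes the objective exactly the cut value, whereas the paper's choice makes it exactly twice the cut value.
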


\begin{proof}
We prove the statement by reduction to the Max-Cut problem. Let $\mathcal{G}=(V,E)$ be a graph with $V=\{1, \ldots, n\}$. Let $\mathcal{X}=\left\{ e_{0},e_{1}\right\} $,
where $e_{0}=\left(1,0\right)^{\top}$ and $e_{1}=\left(0,1\right)^{\top}$.
Let $\mathbf{M}_\star=\left[\begin{array}{cc}
0 & 1\\
1 & 0
\end{array}\right]$. 
For any joint arm assignment $\left(x^{(1)}\ldots x^{(n)}\right)\in\mathcal{X}^{n}$, let $F\subseteq E$ be defined as $F=\left\{ i:x^{(i)}=e _{1}\right\}$. Note that
\[
\sum_{(i,j) \in E}x^{(i)\top} \mathbf{M}_\star x^{(j)}=\sum_{(i,j) \in E}\mathbf{1}\left[x^{(i)} \neq x^{(j)}\right]=2\times\sum_{(i,j) \in E}\mathbf{1}\left[i\in F,j\notin F\right],
\]
where $\mathbf{1}[\cdot]$ is the indicator function. The assignement $\left(x^{(1)},\ldots, x^{(n)}\right)$ induces a cut $(F,V\backslash F)$,
and the value of the assignment is \emph{precisely} twice the value of the cut. Thus, if there was a polynomial time algorithm solving our problem, this algorithm would also solve the Max-Cut problem.
\end{proof}

\subsection{Proof of Theorem~\ref{theorem:halfOptimiality}}
\begin{theorem}
Let us consider the graph $\mathcal{G}=(V,E)$, a finite arm set $\mathcal{X} \subset \mathbb{R}^d$ and the matrix $\mathbf{M}_\star$ given as input to Algorithm~\ref{algorithm:bipartite}. Then, the expected global reward $r = \sum_{(i,j)\in E} x^{(i)\top} \mathbf{M}_\star x^{(j)}$ associated to the returned allocation $\mathbf{x}=\left(x^{(1)},\dots,x^{(n)}\right) \in \mathcal{X}^n$ verifies:
\begin{align*}
    \frac{r - r_{\mathrm{min}}}{r_{\star} - r_{\mathrm{min}}} \geq \frac{1}{2} \enspace.
\end{align*}

where $r_{\star}$ and $r_{\mathrm{min}}$ are respectively the highest and lowest global reward one can obtain with the appropriate joint arm.
Finally, the complexity of the algorithm is in $\mathcal{O}(K^2 + n)$.
\end{theorem}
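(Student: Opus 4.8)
The plan is to reduce the claimed inequality to the equivalent statement $2r \ge r_\star + r_{\mathrm{min}}$ (the degenerate case $r_\star = r_{\mathrm{min}}$ being trivial), and then to sandwich all three quantities using only the two node-arms $x_\star$ and $x_\star^\prime$ produced in line~1 of Algorithm~\ref{algorithm:bipartite}. Write $v_\star = x_\star^\top \mathbf{M}_\star x_\star^\prime = \max_{(x,x')\in\mathcal{X}^2} x^\top \mathbf{M}_\star x'$ for the value of the best edge-arm, and $a = x_\star^\top \mathbf{M}_\star x_\star$, $b = x_\star^{\prime\top} \mathbf{M}_\star x_\star^\prime$ for the two \emph{monochromatic} values. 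Since $\mathbf{M}_\star$ is symmetric, an edge whose endpoints lie on opposite sides of the returned partition $(V_1,V_2)$ contributes exactly $v_\star$ in both orientations, whereas an edge with both endpoints in $V_1$ (resp.\ $V_2$) contributes $a$ (resp.\ $b$). In particular $v_\star \ge a$ and $v_\star \ge b$, because $a$ and $b$ are themselves values of edge-arms.

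First I would upper bound the two reference rewards. For $r_\star$, each of the $m$ edges contributes at most $v_\star$, so $r_\star \le m v_\star$. For $r_{\mathrm{min}}$ it suffices to exhibit two feasible allocations: placing $x_\star$ on every node gives global reward $ma$, and placing $x_\star^\prime$ on every node gives $mb$; hence $r_{\mathrm{min}} \le m\min(a,b)$. Adding these yields $r_\star + r_{\mathrm{min}} \le m v_\star + m\min(a,b)$. Next I would lower bound the reward $r$ of the returned allocation. The standard greedy argument recalled before the statement (each node is added to the side cutting at least half of its already-placed incident edges) guarantees that the number $C$ of edges receiving $z_\star$ satisfies $C \ge m/2$. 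Writing $m_1,m_2$ for the numbers of non-cut edges inside $V_1,V_2$, so that $C + m_1 + m_2 = m$, we have $r = C v_\star + m_1 a + m_2 b$. Assuming without loss of generality $a \ge b$ (the construction is symmetric under swapping $V_1\leftrightarrow V_2$), I bound $m_1 a + m_2 b \ge (m_1+m_2)b = (m-C)b$ and then invoke $C \ge m/2$ and $v_\star \ge b$:
\begin{align*}
2r \ge 2C v_\star + 2(m-C)b = 2C(v_\star - b) + 2mb \ge m(v_\star - b) + 2mb = m v_\star + m\min(a,b).
\end{align*}
Combining the two paragraphs gives $2r \ge r_\star + r_{\mathrm{min}}$, i.e.\ the desired $1/2$ bound. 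The complexity claim follows because line~1 evaluates $x^\top\mathbf{M}x'$ over the $K^2$ pairs and the loop processes each of the $n$ nodes once, giving $\mathcal{O}(K^2+n)$.

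The main obstacle is not the greedy guarantee but choosing the right pair of bounds. The naive estimate that lower-bounds every non-cut edge by the global minimum edge-arm value $v_{\mathrm{min}}$ is too weak, since $r_{\mathrm{min}}$ can lie far above $m v_{\mathrm{min}}$. The crux is instead to bound $r_{\mathrm{min}}$ by the two monochromatic allocations (producing $m\min(a,b)$ rather than $m v_{\mathrm{min}}$) while keeping the non-cut contributions $a,b$ \emph{exact} in the lower bound on $r$; the inequalities $v_\star \ge a,b$ together with $C \ge m/2$ then close the gap at precisely the factor $1/2$. One should also verify the argument is orientation-robust: the convention $(i,j)\in E \Leftrightarrow (j,i)\in E$ and the symmetry of $\mathbf{M}_\star$ make every cut edge contribute $v_\star$ regardless of direction, which is exactly what validates the two-arm bookkeeping above.
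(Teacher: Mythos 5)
Your proof is correct and follows essentially the same route as the paper's: both reduce the claim to $2r \ge r_\star + r_{\mathrm{min}}$, bound $r_\star \le m v_\star$ and $r_{\mathrm{min}} \le m\min(a,b)$ via the two monochromatic allocations, and invoke the greedy half-cut guarantee together with $v_\star \ge a,b$. The only difference is cosmetic bookkeeping—you aggregate the cut count globally ($C \ge m/2$) whereas the paper sums the same inequality node by node ($\max(n_1^{(i)},n_2^{(i)}) \ge (n_1^{(i)}+n_2^{(i)})/2$)—so the two arguments are the same in substance.
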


\begin{proof}
 Given the matrix $\mathbf{M}_\star$, the algorithm obtains the two node-arms $(x_\star,x_\star^\prime) \in \mathcal{X}$ solution of
  \begin{align*}
      \max_{(x, x^\prime) \in \mathcal{X}} x^\top \mathbf{M}_\star x^\prime \enspace.
  \end{align*}
  Note that it is equivalent to obtain $z_\star$ solution of
  \begin{align*}
      \max_{z \in \mathcal{Z}} \; z^\top \vect{\left(\mathbf{M}_\star\right)} \enspace.
  \end{align*}
  Let us analyze a round of Algorithm~\ref{algorithm:bipartite} where we assign the arm of a node in $V$. For sake of simplicity, we assume that node $i$ is assigned at round $i$. At round $i$, we count the number $n_{1}^{(i)}$ of neighbors of $i$ that have already been assigned the arm $x_\star$ and we count the number $n_{2}^{(i)}$ of neighbors of $i$ that have already been assigned the arm $x_\star^\prime$. Then, node $i$ is assigned the arm least represented among its neighbors, that is arm $x_\star$ if $n_{2}^{(i)} \geq n_{1}^{(i)}$ and $x^\prime_\star$ otherwise. Eventually, the optimal edge-arm $z_\star$ has been assigned $\max(n_{1}^{(i)} , n_{2}^{(i)})$ times among node $i$'s neighborhood.
  Hence, for each node $i$, if we denote $r_i$ the sum of all the rewards obtained with the edge-arms constructed only during the round $i$, we have
  \begin{align*}
      r_i &= \max\left(n_{1}^{(i)}, n_{2}^{(i)}\right) z_\star^\top \theta_\star + \min\left(n_{1}^{(i)}, n_{2}^{(i)}\right) z^\top \theta_\star\\
      & \geq \frac{n_{1}^{(i)} + n_{2}^{(i)}}{2} ( z_\star^\top \theta_\star  + z^\top \theta_\star) \enspace .
  \end{align*}
  One can notice that the arm $z$ can only be equal to $\vect{\left(x_\star x_\star^{\top}\right)}$ or $\vect{\left(x_\star^{\prime\top} x_\star^{\prime\top}\right)}$. Let assume that $\vect{\left(x_\star x_\star^{\top}\right)}^\top \theta_\star \leq \vect{\left(x_\star^{\prime} x_\star^{\prime\top}\right)}^\top \theta_\star$ without loss of generality and let consider the worst case where $z$ is always equal to $\vect{\left(x_\star x_\star^{\top}\right)}$. Since z is constructed with the same node-arm $x_\star$, the allocation that constructs at each edge the edge-arm $z$ exists (which is allocating $x_\star$ to all the nodes), thus $m \times z^\top \theta_\star \geq r_{\mathrm{min}}$.
  
  Moreover one can also notice that $m \times z_\star^\top \theta_\star \geq r_\star$. We thus have, 
  
  \begin{align*}
    r_i & \geq \frac{n_{1}^{(i)} + n_{2}^{(i)}}{2m} (r_\star + r_{\mathrm{min}}) \enspace.
  \end{align*}
  Now let us sum all the rewards obtained with the constructed edge-arms at each round of the algorithm, that is the global reward $r$ of the graph allocation returned by the proposed algorithm:
  \begin{align*}
      r &= \sum_{i = 1}^{n} r_i \\
      & \geq \sum_{i = 1}^{n} \frac{n_{1}^{(i)} + n_{2}^{(i)}}{2m} (r_\star + r_{\mathrm{min}}) \\
      & = \frac{1}{2} (r_\star + r_{\mathrm{min}}) \\
      & = \frac{1}{2} (r_\star  - r_{\mathrm{min}}) + r_{\mathrm{min}} \enspace.
\end{align*}

Moreover, the algorithm does $K^2$ estimation to find the best couple $(x_\star, x_\star^\prime) \in \mathcal{X}^2$, and each of the $n$ rounds of the algorithm is in $O(1)$. Hence the complexity is equal to $O(K^2 + n)$.
\end{proof}

\section{Deriving the stopping condition}
\label{sec:supp_stopping_condition}
In this section, we remind key results to derive the stopping condition. We refer the reader to \citet{SoareNIPS2014} and references therein for additional details.
Let $\mathcal{Z} \subset \bbR^{d^2}$ be the set of edge-arms and let $K^2 = |\mathcal{Z}|$. For $m, t > 0$, we consider a sequence of edge-arms $\mathbf{z}_t = (z_{1}, \dots, z_{mt}) \in \mathcal{Z}^{mt}$ and the corresponding noisy rewards $(r_{1}, \dots, r_{mt})$. We assume that the noise terms in the rewards are i.i.d., following a $\sigma$-sub-Gaussian distribution. Let $\hat{\theta}_t = \mathbf{A}_{t}^{-1} b_t \in \mathbb{R}^{d^2}$ be the solution of the ordinary least squares problem with $\mathbf{A}_t = \sum_{i=1}^{mt} z_{i} z_{i}^{\top} \in \mathbb{R}^{d^2 \times d^2}$ and $b_t = \sum_{i=1}^{k} z_{i} r_{i} \in \mathbb{R}^{d^2}$. We first recall the following property.
\begin{prop}[Proposition 1 in \citet{SoareNIPS2014}]
\label{prop_soare}
Let $c = 2\sigma\sqrt{2}$. For every fixed sequence $\mathbf{z}_{t}$, with probability $1 - \delta$, for all $t > 0$ and for all $z \in \mathcal{Z}$, we have
\begin{align*}
    \left|z^\top \theta_{\star} - z^\top \hat{\theta}_t\right| \leq c\|z\|_{\mathbf{A}_t^{-1}} \sqrt{\log\left(\frac{6m^2 t^{2} K^2}{\delta \pi} \right)} \enspace.
\end{align*}
\end{prop}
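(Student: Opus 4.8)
The plan is to reduce the bound to a statement about a weighted sum of independent sub-Gaussian noise terms, and then make it uniform over edge-arms and rounds by a union bound. First I would write out the estimation error explicitly. Since $r_i = z_i^\top\theta_\star + \eta_i$, the least-squares solution satisfies $\hat\theta_t = \mathbf{A}_t^{-1}b_t = \theta_\star + \mathbf{A}_t^{-1}\sum_{i=1}^{mt} z_i\eta_i$, so that for any fixed $z\in\mathcal{Z}$,
\[
z^\top\theta_\star - z^\top\hat\theta_t = -\sum_{i=1}^{mt}\left(z^\top\mathbf{A}_t^{-1}z_i\right)\eta_i.
\]
Because the sequence $\mathbf{z}_t$ is fixed (static design), the coefficients $a_i = z^\top\mathbf{A}_t^{-1}z_i$ are deterministic, so the right-hand side is a linear combination of the independent $\sigma$-sub-Gaussian variables $\eta_i$, hence itself sub-Gaussian with variance proxy $\sigma^2\sum_i a_i^2$.

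The key algebraic identity is that this variance proxy equals exactly $\sigma^2\|z\|_{\mathbf{A}_t^{-1}}^2$: indeed $\sum_i a_i^2 = z^\top\mathbf{A}_t^{-1}\big(\sum_i z_iz_i^\top\big)\mathbf{A}_t^{-1}z = z^\top\mathbf{A}_t^{-1}\mathbf{A}_t\mathbf{A}_t^{-1}z = z^\top\mathbf{A}_t^{-1}z$. The standard scalar sub-Gaussian tail bound then gives, for fixed $t$ and fixed $z$,
\[
\mathbb{P}\Big(\big|z^\top\theta_\star - z^\top\hat\theta_t\big| \geq \varepsilon\Big) \leq 2\exp\!\Big(-\tfrac{\varepsilon^2}{2\sigma^2\|z\|_{\mathbf{A}_t^{-1}}^2}\Big).
\]

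Next I would make this uniform. There are $|\mathcal{Z}| = K^2$ edge-arms, and the statement must hold for every round $t>0$ simultaneously, so I would allocate a failure budget $\delta_{t,z} \propto \delta/(K^2 t^2)$ to each pair $(t,z)$ and sum over $t$ using $\sum_{t\ge 1} t^{-2} = \pi^2/6$; choosing $\varepsilon = \varepsilon_{t,z}$ so that the right-hand side above equals $\delta_{t,z}$ and solving for $\varepsilon_{t,z}$ yields the claimed bound $c\|z\|_{\mathbf{A}_t^{-1}}\sqrt{\log(6m^2t^2K^2/(\delta\pi))}$ with $c=2\sigma\sqrt 2$. The $m^2$ factor in the logarithm and the precise value of $c$ are bookkeeping artifacts of this budget allocation; with $c=2\sigma\sqrt 2$ the per-event exponent is in fact comfortably larger than needed, so the union bound holds with considerable slack.

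The main obstacle — and the reason the fixed-sequence hypothesis is essential — is precisely the uniformity over all rounds $t$. For a static design the per-round tail bounds are clean and the union bound over $t$ costs only the $\log t^2$ term through the $\sum t^{-2}$ series, with no martingale machinery required, since $\mathbf{A}_t$ carries no randomness and the $a_i$ are genuinely deterministic. Had the design been adaptive, the coefficients $a_i$ would depend on the past noise and one would instead have to invoke a self-normalized (martingale) concentration inequality. I would therefore state the fixed-sequence assumption prominently, invoke the standard scalar sub-Gaussian inequality for the pointwise step, and defer to \citet{SoareNIPS2014} for the identical bookkeeping that fixes the constants.
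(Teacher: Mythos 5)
Your proof is correct and is essentially the standard argument behind this result, which the paper itself imports from \citet{SoareNIPS2014} without reproving: the fixed-design OLS error decomposition, the exact identity $\sum_i a_i^2 = z^\top \mathbf{A}_t^{-1}\bigl(\sum_i z_i z_i^\top\bigr)\mathbf{A}_t^{-1} z = \|z\|^2_{\mathbf{A}_t^{-1}}$, and the union bound over the $K^2$ edge-arms and all rounds via $\sum_{t \geq 1} t^{-2} = \pi^2/6$. The slack you point out is genuine --- your budget allocation yields a radius with constant $\sigma\sqrt{2}$ rather than $2\sigma\sqrt{2}$, so the stated bound follows a fortiori, and your remark that the deterministic coefficients $a_i$ (hence no self-normalized martingale machinery) are what the fixed-sequence hypothesis buys is exactly the right diagnosis.
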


Our goal is to find the arm $z_\star$ that has the optimal expected reward $z_\star^\top \theta_\star$. In other words, we want to find an arm $z \in \mathcal{Z}$, such that for all $z^\prime \in \mathcal{Z}$, $(z - z^\prime)^\top \theta_\star \geq 0$.
However, one does not have access to $\theta_\star$, so we have to use its empirical estimate.

Let us consider a confidence set $\hat{S}(\mathbf{z}_t)$ centered at $\hat{\theta}_t \in \hat{S}(\mathbf{z}_t)$ and such that $\mathbb{P}\left(\theta_{\star}\notin \hat{S}(\mathbf{z}_t) \right) \leq \delta$, for some $\delta > 0$. Since $\theta_{\star}$ belongs to $\hat{S}(\mathbf{z}_t)$ with probability at least $1 - \delta$, one can stop pulling arms when an arm has been found, such that the above condition is verified for any $\theta \in \hat{S}(\mathbf{z}_t)$. More formally, the best arm identification task will be considered successful when an arm $z \in \mathcal{Z}$ will verify the following condition for any $z^\prime \in \mathcal{Z}$ and any $\theta \in \hat{S}(\mathbf{z}_t)$:
\[
    (z - z^\prime)^\top (\hat{\theta}_t - \theta) \leq \hat{\Delta}_t(z, z^\prime) \enspace,
\]
where $\hat{\Delta}_t\left(z,z^\prime\right) = \left(z - z^\prime\right)^{\top} \hat{\theta}_t$ is the empirical gap between $z$ and $z^\prime$.

Using the upper bound in Proposition~\ref{prop_soare}, one way to ensure that $\mathbb{P}\left(\theta_{\star}\in \hat{S}(\mathbf{z}_t) \right)\geq 1 - \delta$ is to define the confidence set $\hat{S}(\mathbf{z}_t)$ as follows
\begin{align*}
    \hat{S}\left(\mathbf{z}_t\right) = \left\{ \theta \in \mathbb{R}^d, \ \forall z \in \mathcal{Z}, \ \forall z^\prime \in \mathcal{Z}, \left(z - z^{\prime}\right)^{\top}\left(\hat{\theta}_t - \theta \right) \leq c\|z - z^{\prime}\|_{\left(\mathbf{A}_t\right)^{-1}} \sqrt{\log\left(\frac{6 m^{2} t^{2} K^4}{ \delta \pi }\right)} \right\} \enspace.
\end{align*}
Then, the stopping condition can be reformulated as follows:
\begin{align}
\exists z \in \mathcal{Z}, \ \forall z^\prime \in \mathcal{Z}, \  c\|z - z^{\prime}\|_{\mathbf{A}_t^{-1}} \sqrt{\log\left(\frac{6 m^{2} t^{2} K^4}{ \delta \pi }\right)} \leq \hat{\Delta}_t\left(z,z^\prime\right) \label{stoppingcondition1}  \enspace.
\end{align}

\section{Estimation of the unknown parameter}
\label{sec:supp_parameter_estimation}
\subsection{Proof of Theorem~\ref{theorem:mu_to_lambda}}
To prove Theorem~\ref{theorem:mu_to_lambda}, we first state some useful propositions and lemmas.
For any finite set $X \subset \mathbb{R}^d$, we define the function $h_X: \mathcal{S}_{X} \rightarrow \mathbb{R} \cup \{+\infty\}$ as follows: for any $\lambda \in \mathcal{S}_{X}$,
\[
h_X(\lambda) =
\begin{cases}
\max_{x^\prime \in X} x^{\prime \top} \Sigma_X(\lambda)^{-1} x^\prime & \text{if } \Sigma_X(\lambda) \text{ is invertible} \\
+\infty & \text{otherwise} \enspace.
\end{cases}
\]

\begin{lemma}
\label{lemma_mu_to_lambda_2}
Let $\mathcal{X} \subset \bbR^d$ be a finite set spanning $\bbR^d$ and let $\mathcal{Z} = \{ \vect{(x x^{\prime\top})} \text{, } (x, x^\prime) \in \mathcal{X}^2 \}$. If $\mu^\star \in \mathcal{S}_{\mathcal{X}}$ is a minimizer of $h_{\mathcal{X}}$, then $\mu^\star$ is a solution of
\begin{align*}
    \min_{\mu \in \mathcal{S}_{\mathcal{X}}} \max_{z \in \mathcal{Z}} z^\top \left(\sum_{x \in \mathcal{X}} \sum_{x^\prime \in \mathcal{X}} \mu_{x} \mu_{x^\prime}  \vect{\left( xx^{\prime\top}\right)}\vect{\left( xx^{\prime\top}\right)}^\top \right)^{-1} z \label{relaxed_G_allocation_X} \enspace.
\end{align*}
\end{lemma}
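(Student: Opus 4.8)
The plan is to show that, at any product weighting $\lambda_z = \mu_x\mu_{x^\prime}$, the $\mathcal{Z}$-objective in the lemma collapses \emph{exactly} to the square of the node-arm G-objective $h_{\mathcal{X}}(\mu)$; once this identity is in hand, the coincidence of minimizers follows from monotonicity of squaring. The entire argument is driven by the Kronecker structure of the edge-arms, and it is this structure that makes a product weighting the right object to look at.

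First I would record the column-stacking identity $\vect(x x^{\prime\top}) = x^\prime \otimes x$, so that every edge-arm $z = \vect(x x^{\prime\top})$ is a Kronecker product. Applying the mixed-product property $(A\otimes B)(C\otimes D) = (AC)\otimes(BD)$ gives $\vect(x x^{\prime\top})\vect(x x^{\prime\top})^\top = (x^\prime x^{\prime\top})\otimes(x x^\top)$. Summing against the product weights and pulling the two independent sums into the two Kronecker factors, the design matrix in the lemma becomes
\[
\sum_{x \in \mathcal{X}}\sum_{x^\prime \in \mathcal{X}} \mu_x\mu_{x^\prime}\,\vect(x x^{\prime\top})\vect(x x^{\prime\top})^\top = \Sigma_{\mathcal{X}}(\mu)\otimes\Sigma_{\mathcal{X}}(\mu),
\]
where $\Sigma_{\mathcal{X}}(\mu) = \sum_{x\in\mathcal{X}}\mu_x x x^\top$. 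I would then invert using $(\Sigma\otimes\Sigma)^{-1} = \Sigma^{-1}\otimes\Sigma^{-1}$ and evaluate the quadratic form: since $z = x^\prime\otimes x$, the mixed-product property again yields the scalar factorization $z^\top(\Sigma^{-1}\otimes\Sigma^{-1})z = (x^\top\Sigma^{-1}x)(x^{\prime\top}\Sigma^{-1}x^\prime)$. Both factors are nonnegative and range independently over $x,x^\prime\in\mathcal{X}$, so the maximum over $\mathcal{Z}=\mathcal{X}^2$ of their product is the product of the separate maxima, that is $h_{\mathcal{X}}(\mu)^2$. Hence the $\mathcal{Z}$-objective at the product weighting equals $h_{\mathcal{X}}(\mu)^2$ for every $\mu$, and since $t\mapsto t^2$ is strictly increasing on $[0,\infty)$ while $h_{\mathcal{X}}\geq 0$, the two minimizations over $\mathcal{S}_{\mathcal{X}}$ have the same minimizers; in particular any minimizer $\mu^\star$ of $h_{\mathcal{X}}$ solves the stated problem.

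The only delicate point is bookkeeping around the domain and the singular cases, not the core computation. I would fix the Kronecker ordering $\vect(x x^{\prime\top}) = x^\prime\otimes x$ once and use it consistently, and note that $\Sigma_{\mathcal{X}}(\mu)\otimes\Sigma_{\mathcal{X}}(\mu)$ is invertible if and only if $\Sigma_{\mathcal{X}}(\mu)$ is (its eigenvalues being the pairwise products of those of $\Sigma_{\mathcal{X}}(\mu)$). Consequently the $+\infty$ branch of $h_{\mathcal{X}}$ corresponds precisely to a singular $\mathcal{Z}$-design, so the identity (objective) $= h_{\mathcal{X}}(\mu)^2$ extends to the whole simplex with the convention $(+\infty)^2 = +\infty$. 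Because $\mathcal{X}$ spans $\mathbb{R}^d$, both objectives are finite on a nonempty subset of $\mathcal{S}_{\mathcal{X}}$, so the minimizers are genuine and attained, which closes the argument. I do not expect to need the Equivalence Theorem for this lemma itself; that result enters only in the downstream step identifying the product solution as optimal over \emph{all} of $\mathcal{S}_{\mathcal{Z}}$.
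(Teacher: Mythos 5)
Your proposal is correct and follows essentially the same route as the paper's proof: both establish, via the Kronecker identity $\vect(xx^{\prime\top})\vect(xx^{\prime\top})^\top$ summing to $\Sigma_{\mathcal{X}}(\mu)\otimes\Sigma_{\mathcal{X}}(\mu)$ and the factorization of the quadratic form, that the $\mathcal{Z}$-objective at the product weighting equals $h_{\mathcal{X}}(\mu)^2$, and then conclude from nonnegativity of $h_{\mathcal{X}}$ that minimizers of $h_{\mathcal{X}}$ and of its square coincide. Your explicit treatment of the singular branch and of the Kronecker ordering is a slightly more careful write-up of the same argument.
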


\begin{proof}
First, let us notice that, for any $\mathcal{X} \subset \bbR^d$, one has $h_{\mathcal{X}} \geq 0$. Thus, $\mu^\star$ is also a minimizer of $h_{\mathcal{X}}^2$. In addition, $\mathcal{X}$ is spanning $\bbR^d$ so $h_{\mathcal{X}}(\mu^\star) < +\infty$. Developing $h_{\mathcal{X}}(\mu^\star)^2$ yields:
\begin{align*}
    h_{\mathcal{X}}(\mu^\star) \times h_{\mathcal{X}}(\mu^\star)
    &= \left(\max_{x \in \mathcal{X}} \; x^{\top} \Sigma_X(\mu^\star)^{-1} x\right) \times \left(\max_{x \in \mathcal{X}} \; x^{\top} \Sigma_X(\mu^\star)^{-1}  x\right) \\
    &= \max_{x \in \mathcal{X}} \max_{x^{\prime} \in \mathcal{X}} \; x^{\top} \Sigma_X(\mu^\star)^{-1} x x^{\prime\top} \Sigma_X(\mu^\star)^{-1} x^{\prime} \\
    &= \max_{x \in \mathcal{X}} \max_{x^{\prime} \in \mathcal{X}} \; \vect{\left(x x^{\prime\top}\right)}^\top \vect{\left( \Sigma_X(\mu^\star)^{-1} x x^{\prime\top} \Sigma_X(\mu^\star)^{-1} \right)} \\
    &= \max_{x \in \mathcal{X}} \max_{x^{\prime} \in \mathcal{X}} \; \vect{\left(x x^{\prime\top}\right)}^\top \left( \Sigma_X(\mu^\star)^{-1} \otimes \Sigma_X(\mu^\star)^{-1} \right) \vect{\left(x x^{\prime\top}\right)} \\
    &= \max_{z \in \mathcal{Z}} \; z^\top \left( \Sigma_X(\mu^\star)^{-1} \otimes \Sigma_X(\mu^\star)^{-1} \right) z \enspace,
\end{align*}
where $\otimes$ denotes the Kronecker product. We can now focus on the central term:
\begin{align*}
    \Sigma_X(\mu^\star)^{-1} \otimes \Sigma_X(\mu^\star)^{-1}
    &=\left(\sum_{x\in \mathcal{X}} \mu^\star_{x} x x^\top \right)^{-1} \otimes \left(\sum_{x \in \mathcal{X}} \mu^\star_{x} x x^{\top} \right)^{-1}\\
    &= \left(\sum_{x\in \mathcal{X}} \mu^\star_{x} x x^\top \otimes \sum_{x \in \mathcal{X}} \mu^\star_{x} x x^{\top} \right)^{-1}\\
    &= \left(\sum_{x\in \mathcal{X}} \sum_{x^\prime \in \mathcal{X}} \mu^\star_{x} \mu^\star_{x^\prime} \left(x x^\top \otimes x^\prime x^{\prime\top}\right) \right)^{-1}\\
    &= \left(\sum_{x\in \mathcal{X}} \sum_{x^\prime \in \mathcal{X}} \mu^\star_{x} \mu^\star_{x^\prime} \vect{\left(x x^{\prime\top} \right)} \vect{\left(x x^{\prime\top} \right)}^\top \right)^{-1} \enspace,
\end{align*}
and the result holds.
\end{proof}

\begin{theorem}
\label{theorem:mu_to_lambda_2}
Let $\mu^\star \in \mathcal{S}_{\mathcal{X}}$ be a minimizer of $h_{\mathcal{X}}$. Let $\lambda^\star \in \mathcal{S}_{\mathcal{Z}}$ be the distribution defined from $\mu^\star$ such that, for all $z = \vect{(x x^{\prime\top})}$, $\lambda^\star_z = \mu^\star_x \mu^\star_{x^\prime}$. Then $\lambda^\star$ is a minimizer of $h_{\mathcal{Z}}$.


\end{theorem}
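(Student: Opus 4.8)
The plan is to prove Theorem~\ref{theorem:mu_to_lambda_2} by combining the Equivalence Theorem of Kiefer and Wolfowitz with the algebraic identity already established in Lemma~\ref{lemma_mu_to_lambda_2}. The key observation is that the minimizer $\mu^\star$ of $h_{\mathcal{X}}$ is a G-optimal design on $\mathcal{X}$, so by the Equivalence Theorem its optimal value satisfies $h_{\mathcal{X}}(\mu^\star) = d$, the ambient dimension of the node-arm space. Squaring, and invoking Lemma~\ref{lemma_mu_to_lambda_2}, the product distribution $\lambda^\star$ defined by $\lambda^\star_z = \mu^\star_x \mu^\star_{x^\prime}$ achieves the value $h_{\mathcal{Z}}(\lambda^\star) = d^2$. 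Since $\mathcal{Z} \subset \mathbb{R}^{d^2}$, the Equivalence Theorem applied to the edge-arm problem tells us that the optimal value of $h_{\mathcal{Z}}$ is exactly $d^2$ (provided $\mathcal{Z}$ spans $\mathbb{R}^{d^2}$). Hence $\lambda^\star$ attains the optimum and is therefore a minimizer of $h_{\mathcal{Z}}$.

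First I would verify the spanning hypothesis: since $\mathcal{X}$ spans $\mathbb{R}^d$, the set $\{x x^{\prime\top} : (x,x^\prime)\in\mathcal{X}^2\}$ spans $\mathbb{R}^{d\times d}$, so $\mathcal{Z} = \{\vect(x x^{\prime\top})\}$ spans $\mathbb{R}^{d^2}$; this guarantees $\Sigma_{\mathcal{Z}}(\lambda^\star)$ is invertible and that the Equivalence Theorem applies on the edge-arm side. Next I would state the Equivalence Theorem in the precise form needed: for a finite set $X$ spanning $\mathbb{R}^p$, a distribution $\lambda$ minimizes $h_X$ if and only if $\max_{x\in X} x^\top \Sigma_X(\lambda)^{-1} x = p$, and at any minimizer the value equals $p$. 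Applying this with $X=\mathcal{X}$, $p=d$ gives $h_{\mathcal{X}}(\mu^\star)=d$. Applying Lemma~\ref{lemma_mu_to_lambda_2} then yields $\max_{z\in\mathcal{Z}} z^\top \Sigma_{\mathcal{Z}}(\lambda^\star)^{-1} z = h_{\mathcal{X}}(\mu^\star)^2 = d^2$, where I use that the covariance matrix built from the product weights $\mu^\star_x\mu^\star_{x^\prime}$ is exactly $\Sigma_{\mathcal{Z}}(\lambda^\star)$.

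To close the argument I would apply the Equivalence Theorem once more, now with $X=\mathcal{Z}$ and $p=d^2$: the optimal value of $h_{\mathcal{Z}}$ is $d^2$ and any distribution achieving this value is a minimizer. Since $\lambda^\star$ achieves $h_{\mathcal{Z}}(\lambda^\star)=d^2$, it is a minimizer of $h_{\mathcal{Z}}$, which is precisely the claim. The main obstacle I anticipate is bookkeeping around the degenerate cases and the spanning condition: one must ensure $\Sigma_{\mathcal{Z}}(\lambda^\star)$ is genuinely invertible (so that $h_{\mathcal{Z}}(\lambda^\star)$ is finite and the Equivalence Theorem is applicable), and one must handle the fact that $\lambda^\star$, being a product measure, may not place positive mass on every off-diagonal edge-arm. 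This is where the spanning argument above does the real work; otherwise the proof is a clean two-step application of the Equivalence Theorem, with Lemma~\ref{lemma_mu_to_lambda_2} supplying the crucial squaring identity via the Kronecker product.
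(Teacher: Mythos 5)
Your proposal is correct and follows essentially the same route as the paper's proof: invoke the Kiefer--Wolfowitz Equivalence Theorem on both the node-arm and edge-arm problems to identify the optimal values $d$ and $d^2$, use Lemma~\ref{lemma_mu_to_lambda_2} to show $h_{\mathcal{Z}}(\lambda^\star) = h_{\mathcal{X}}(\mu^\star)^2 = d^2$, and conclude that $\lambda^\star$ attains the optimum. Your explicit verification that $\mathcal{Z}$ spans $\mathbb{R}^{d^2}$ (so that the Equivalence Theorem applies on the edge-arm side) is a point the paper leaves implicit, but it does not change the argument.
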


\begin{proof}

From \citet{KieferEquivalenceThm1960}, we know that $\min_{\lambda \in \mathcal{S}_{\mathcal{Z}}} h_{\mathcal{Z}}(\lambda) = d^2$ and $\min_{\mu \in \mathcal{S}_{\mathcal{X}}} h_{\mathcal{X}}(\mu) = d$. Then, using Proposition~\ref{lemma_mu_to_lambda_2}, one has
\begin{align*}
    d^2 &= h_{\mathcal{X}}(\mu^\star) \times h_{\mathcal{X}}(\mu^\star) \\
    &= \max_{z\in \mathcal{Z}} \; z^\top \left(\sum_{x\in \mathcal{X}} \sum_{x^\prime \in \mathcal{X}} \mu^\star_{x} \mu^\star_{x^\prime}  \vect{\left(x x^{\prime\top}\right)} \vect{\left(x x^{\prime\top}\right)}^\top  \right)^{-1} z \enspace.
\end{align*}
This result implies that $h_{\mathcal{Z}}(\lambda^\star) = d^2$. Since $\min_{\lambda \in \mathcal{S}_{\mathcal{Z}}} h_Z(\lambda) = d^2$, $\lambda^\star$ is a minimizer of $h_{\mathcal{Z}}$.

\end{proof}

\subsection{Proof of Theorem~\ref{theorem:convergence}}

To prove our confidence bound, we need the two following proposition. The first one is from \cite{tropp2015introduction}.

\begin{prop}[\citet{tropp2015introduction}, Chapter 5 and 6]
  \label{prop:chernoff-matrix_2}
  Let $\mathbf{Z}_1, \ldots, \mathbf{Z}_t$ be i.i.d.\ positive semi-definite random matrices in $\bbR^{d^2 \times d^2}$, such that there exists $L > 0$ verifying $\mathbf{0} \preceq \mathbf{Z}_1 \preceq mL \mathbf{I}$. Let $\mathbf{A}_t$ be defined as
  $\mathbf{A}_t \triangleq \sum_{s = 1}^t \mathbf{Z}_s$. Then, for any $0 < \varepsilon < 1$, one can lowerbound $\lambda_{\mathrm{min}}(\mathbf{A}_t)$ as follows:
  \[
    \bbP(\lambda_{\mathrm{min}}(\mathbf{A}_t) \leq (1 - \varepsilon) \lambda_{\mathrm{min}}(\bbE \mathbf{A}_t)) \leq
    d^2 e^{-\displaystyle\frac{t \varepsilon^2 \lambda_{\mathrm{min}}(\bbE \mathbf{Z}_1) }{2mL}}.
  \]
  If in addition, there exists some $v > 0$, such that $\| \bbE\big[(\mathbf{Z}_1 - \bbE\mathbf{Z}_1)^2 \big] \| \leq v$, then for any $u > 0$, one has
  \[
    \bbP \left( \left\| \mathbf{S}_t \right\| \geq u \right) \leq 2d^{2} e^{\displaystyle-\frac{u^2}{2mLu/3 + 2tv}},
  \]
\end{prop}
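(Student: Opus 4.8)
The plan is to recognize the two displayed inequalities as instances of the two standard matrix tail bounds: the matrix Chernoff bound for the smallest eigenvalue of a sum of i.i.d.\ positive semi-definite matrices, and the matrix Bernstein inequality for the spectral norm of a centered i.i.d.\ sum. Both rest on the matrix Laplace transform method, which bounds a tail probability by $\inf_{\theta>0} e^{-\theta u}\,\bbE\,\Tr\exp(\theta \mathbf{A})$ through Markov's inequality applied to a monotone functional of the eigenvalues, and then controls the trace moment generating function via Lieb's concavity theorem. Concretely I would either quote the two source theorems from \citet{tropp2015introduction} and verify the instantiation, or reproduce the Laplace argument once in each regime.

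For the first bound, instantiate the matrix Chernoff inequality with the uniform scale $R = mL$ provided by $\mathbf{0} \preceq \mathbf{Z}_1 \preceq mL\mathbf{I}$. Because the $\mathbf{Z}_s$ are i.i.d., $\bbE\mathbf{A}_t = t\,\bbE\mathbf{Z}_1$ and hence $\mu_{\mathrm{min}} \triangleq \lambda_{\mathrm{min}}(\bbE\mathbf{A}_t) = t\,\lambda_{\mathrm{min}}(\bbE\mathbf{Z}_1)$. The m.g.f.\ step uses the operator-convexity bound $\bbE\,e^{-\theta\mathbf{Z}_1} \preceq \exp\!\big(\tfrac{e^{-\theta R}-1}{R}\,\bbE\mathbf{Z}_1\big)$, valid on $0 \preceq \mathbf{Z}_1 \preceq R\mathbf{I}$, together with subadditivity of the trace m.g.f.\ across the $t$ independent summands; optimizing over $\theta$ produces
\[
  \bbP\big(\lambda_{\mathrm{min}}(\mathbf{A}_t) \leq (1-\varepsilon)\mu_{\mathrm{min}}\big) \leq d^2 \left[\frac{e^{-\varepsilon}}{(1-\varepsilon)^{1-\varepsilon}}\right]^{\mu_{\mathrm{min}}/R}.
\]
I would then close the estimate with the elementary scalar inequality $e^{-\varepsilon}/(1-\varepsilon)^{1-\varepsilon} \leq e^{-\varepsilon^2/2}$ on $[0,1)$ (immediate from $\ln(1-\varepsilon) \leq -\varepsilon$), which turns the bracket into $\exp(-\varepsilon^2 \mu_{\mathrm{min}}/(2R)) = \exp(-\varepsilon^2 t\,\lambda_{\mathrm{min}}(\bbE\mathbf{Z}_1)/(2mL))$, exactly the claim.

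For the second bound, apply matrix Bernstein to the centered self-adjoint sum $\mathbf{S}_t = \mathbf{A}_t - \bbE\mathbf{A}_t = \sum_{s=1}^t(\mathbf{Z}_s - \bbE\mathbf{Z}_s)$. Two quantities feed the inequality: the uniform per-summand spectral bound $\|\mathbf{Z}_s - \bbE\mathbf{Z}_s\| \leq mL$, which follows from $\mathbf{0} \preceq \mathbf{Z}_s \preceq mL\mathbf{I}$, and the matrix variance statistic, which by independence and identical distribution equals $\big\|\sum_{s=1}^t \bbE[(\mathbf{Z}_s - \bbE\mathbf{Z}_s)^2]\big\| = t\,\big\|\bbE[(\mathbf{Z}_1 - \bbE\mathbf{Z}_1)^2]\big\| \leq tv$. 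Feeding these into Bernstein's bound yields $\bbP(\|\mathbf{S}_t\| \geq u) \leq 2d^2 \exp\!\big(\tfrac{-u^2/2}{tv + mLu/3}\big)$, which is the stated inequality after clearing the factor of two in the denominator; the prefactor $2d^2$ comes from bounding the two-sided spectral norm by a union bound over $\lambda_{\mathrm{max}}(\pm\mathbf{S}_t) \geq u$, each tail of a $d^2$-dimensional self-adjoint operator contributing $d^2$.

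I expect the genuinely technical step --- if one argues from scratch rather than citing \citet{tropp2015introduction} --- to be the two semidefinite m.g.f.\ bounds and their aggregation across independent summands, since that is where Lieb's concavity theorem is invoked to circumvent the non-commutativity of the $\mathbf{Z}_s$. Everything downstream is a scalar optimization over $\theta$ and the two elementary inequalities quoted above. Given that the proposition is cited verbatim from \citet{tropp2015introduction}, the most economical route is simply to record the two source theorems and check that the substitutions $R = mL$, $\mu_{\mathrm{min}} = t\,\lambda_{\mathrm{min}}(\bbE\mathbf{Z}_1)$, and variance statistic $\leq tv$ are the correct specializations to our setting.
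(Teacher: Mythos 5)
Your proposal is correct and takes the same route as the paper, which offers no proof of this proposition at all beyond the citation: it is imported verbatim from \citet{tropp2015introduction}, with the matrix Chernoff bound (Chapter 5) instantiated at $R = mL$, $\mu_{\mathrm{min}} = t\,\lambda_{\mathrm{min}}(\mathbb{E}\mathbf{Z}_1)$, and the matrix Bernstein bound (Chapter 6) applied to $\mathbf{S}_t = \mathbf{A}_t - \mathbb{E}\mathbf{A}_t$ with variance statistic $tv$. Your two supplementary verifications --- the scalar simplification $e^{-\varepsilon}/(1-\varepsilon)^{1-\varepsilon} \leq e^{-\varepsilon^2/2}$ and the identification of the undefined symbol $\mathbf{S}_t$ as the centered sum --- are both accurate and are exactly what is needed to match Tropp's statements to the form quoted here.
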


From the second inequality, \cite{rizk2020refined} derived a slightly different inequality that we use here :

\begin{prop}[\citet{rizk2020refined}, Appendix A.3]
  \label{prop:chernoff-matrix_2_bis}
  Let $\mathbf{Z}_1, \ldots, \mathbf{Z}_t$ be $t$ i.i.d.\ random symmetric matrices in $\bbR^{d^2 \times d^2}$ such that there exists $L > 0$ such that $\| \mathbf{Z}_1 \| \leq mL$, almost surely. Let $\mathbf{A}_t \triangleq \sum_{i = 1}^t \mathbf{Z}_i$. Then, for any $u > 0$, one has:
  \[
    \bbP \left( \left\| \mathbf{A}_t - \bbE\mathbf{A}_t \right\| \geq \sqrt{2tvu} +\frac{mLu}{3} \right) \leq d^2 e^{-u}\enspace.
  \]
  where $v \triangleq \left\| \bbE\big[(\mathbf{Z}_1 - \bbE\mathbf{Z}_1)^2 \big] \right\|$.
\end{prop}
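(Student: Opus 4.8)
The plan is to go back to the matrix Laplace-transform argument underlying Proposition~\ref{prop:chernoff-matrix_2}, rather than mechanically substituting a value of the deviation into its packaged tail bound: the constants there ($2tv$, $2mL/3$, and the prefactor $2d^2$) are too loose for a direct inversion to reach the threshold $\sqrt{2tvu}+mLu/3$ with prefactor $d^2$. Write $\mathbf{S}_t \triangleq \mathbf{A}_t - \bbE\mathbf{A}_t = \sum_{i=1}^t \mathbf{Y}_i$ with $\mathbf{Y}_i \triangleq \mathbf{Z}_i - \bbE\mathbf{Z}_i$ independent, centered and symmetric. Since the $\mathbf{Y}_i$ are centered and independent, the cross terms vanish and $\bbE\mathbf{S}_t^2 = \sum_i \bbE\mathbf{Y}_i^2 = t\,\bbE\mathbf{Y}_1^2$, so the variance proxy is $\|\bbE\mathbf{S}_t^2\| = t\,\|\bbE\mathbf{Y}_1^2\| = tv$, while the almost-sure bound $\|\mathbf{Z}_1\|\le mL$ controls $\lambda_{\mathrm{max}}(\mathbf{Y}_1)$.

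First I would invoke the matrix Chernoff/Laplace bound: for every $\theta>0$,
\[
\bbP\big(\lambda_{\mathrm{max}}(\mathbf{S}_t)\ge s\big) \le d^2\,\exp\big(-\theta s + t\,g(\theta)\big),
\]
where $g(\theta)$ upper-bounds the normalized matrix cumulant of a single summand and the prefactor $d^2$ comes from bounding $\Tr\exp(\cdot)$ by $d^2\exp(\lambda_{\mathrm{max}}(\cdot))$. The key input is the standard Bernstein moment-generating estimate for a bounded centered symmetric matrix,
\[
\log\bbE\exp(\theta\mathbf{Y}_1) \preceq \tfrac{\theta^2/2}{1-mL\theta/3}\,\bbE\mathbf{Y}_1^2, \qquad 0<\theta<3/(mL),
\]
which gives $t\,g(\theta)\le \tfrac{tv\,\theta^2/2}{1-mL\theta/3}$. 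This is exactly the sub-gamma form with variance factor $\nu = tv$ and scale $c = mL/3$, so the Chernoff exponent becomes $-\theta s + \tfrac{tv\,\theta^2/2}{1-(mL/3)\theta}$.

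The remaining step is the sub-gamma Chernoff optimization: minimizing this exponent over $\theta\in(0,3/(mL))$ shows that it drops below $-u$ as soon as $s = \sqrt{2tvu} + \tfrac{mL}{3}u$, using the standard fact that a sub-gamma variable with parameters $(\nu,c)$ has right tail at most $e^{-u}$ at $\sqrt{2\nu u}+cu$. Substituting $\nu=tv$ and $c=mL/3$ yields the one-sided bound $\bbP(\lambda_{\mathrm{max}}(\mathbf{S}_t)\ge \sqrt{2tvu}+mLu/3)\le d^2e^{-u}$, and applying the identical argument to $-\mathbf{S}_t$ handles the lower tail, delivering the stated control of $\|\mathbf{A}_t-\bbE\mathbf{A}_t\|$. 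I expect the main obstacle to be obtaining the \emph{sharp} scale $mL/3$ (not $2mL/3$) together with the $d^2$ prefactor: this forces one to use the one-sided cumulant bound above, which needs only a bound on $\lambda_{\mathrm{max}}(\mathbf{Y}_1)$ — itself controlled by $\|\mathbf{Z}_1\|\le mL$ in the positive-semidefinite regime relevant to Algorithm~\ref{algorithm:Randomized_G_allocation_for_GBB}, where $0\preceq\mathbf{Z}_1\preceq mL\,\mathbf{I}$ gives $\|\mathbf{Y}_1\|\le mL$ — and the exact sub-gamma tail, rather than the crude inequality $\sqrt{a+b}\le\sqrt a+\sqrt b$ that a mechanical inversion of Proposition~\ref{prop:chernoff-matrix_2} would invoke.
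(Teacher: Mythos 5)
Your reconstruction cannot be compared line-by-line with an in-paper argument, because the paper does not prove this proposition: it is imported verbatim from \citet{rizk2020refined} (Appendix A.3) as a refined inversion of the matrix Bernstein bound of Proposition~\ref{prop:chernoff-matrix_2}. Your route is the natural — and essentially forced — one, and your opening diagnosis is correct: mechanically inverting the packaged tail of Proposition~\ref{prop:chernoff-matrix_2} solves the quadratic $s^2 = 2u\,(tv) + (2mL/3)us$ and, after $\sqrt{a^2+b}\le a+\sqrt{b}$, only yields the threshold $\sqrt{2tvu}+2mLu/3$ with prefactor $2d^2$. Your three ingredients are all sound: (i) independence and centering give $\|\bbE\mathbf{S}_t^2\| = t\,\|\bbE\mathbf{Y}_1^2\| = tv$; (ii) the master bound with Tropp's Bernstein moment-generating estimate gives the sub-gamma exponent $-\theta s + \tfrac{tv\,\theta^2/2}{1-(mL/3)\theta}$ with prefactor $d^2$ (the ambient dimension being $d^2$); (iii) the sub-gamma inversion at $s=\sqrt{2tvu}+\tfrac{mL}{3}u$ is exact — at $\theta^\star = \sqrt{2u/(tv)}\big/\bigl(1+\tfrac{mL}{3}\sqrt{2u/(tv)}\bigr)$ the Cramér exponent equals exactly $u$ — so the one-sided bound $\bbP\bigl(\lambda_{\mathrm{max}}(\mathbf{S}_t)\ge \sqrt{2tvu}+mLu/3\bigr)\le d^2e^{-u}$ holds whenever $\lambda_{\mathrm{max}}(\mathbf{Y}_1)\le mL$.

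Two caveats remain, both sitting at the boundary between your proof and the statement as transcribed. First, the genuine gap: your final sentence glosses over the union bound. Controlling $\|\mathbf{S}_t\| = \max\bigl(\lambda_{\mathrm{max}}(\mathbf{S}_t), \lambda_{\mathrm{max}}(-\mathbf{S}_t)\bigr)$ by applying the one-sided bound to $\pm\mathbf{S}_t$ gives prefactor $2d^2$, not the stated $d^2$, and no variant of the Laplace-transform argument recovers $d^2$ for the two-sided deviation (note the two-sided inequality in Proposition~\ref{prop:chernoff-matrix_2} carries $2d^2$ for the same reason). This discrepancy is almost surely an artifact of the transcribed statement rather than something you can repair; it is immaterial downstream, since it amounts to replacing $u$ by $u+\log 2$, i.e.\ $\log(d^2/\delta)$ by $\log(2d^2/\delta)$, which is exactly the form appearing in Theorem~\ref{theorem:convergence_2} anyway. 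Second, under the literal hypothesis (symmetric, $\|\mathbf{Z}_1\|\le mL$, not necessarily positive semi-definite) one only gets $\lambda_{\mathrm{max}}(\pm\mathbf{Y}_1)\le 2mL$, hence scale $2mL/3$; the sharp $mL/3$ requires $\mathbf{0}\preceq\mathbf{Z}_1\preceq mL\,\mathbf{I}$, which holds in the regime where the proposition is actually invoked — you correctly identified and confined yourself to this regime, which is the right call. In short: your proof is correct for each one-sided tail in the relevant regime, you should state the two-sided conclusion with $2d^2$ (or absorb the 2 into $u$), and you have in passing located the two spots where the imported statement is slightly stronger than what its stated hypotheses support.
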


Finally, to prove our main theorem, we need the following lemma.

\begin{lemma}
\label{lemma:bound_cov_norm}
One has $\left\| \Sigma_{\mathcal{Z}}(\lambda^\star)^{-1}\right\| \le \frac{d^{2}}{\nu_{\mathrm{min}}}$, where $\nu_{\mathrm{min}}$
is the smallest eigenvalue of the covariance matrix $\frac{1}{K^{2}}\sum_{z\in\mathcal{Z}}z^{\top}z$.
\end{lemma}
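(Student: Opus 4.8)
The plan is to collapse the two $d^2$-dimensional spectral quantities appearing in the statement into $d$-dimensional ones, by exploiting the Kronecker structure already uncovered in the proof of Lemma~\ref{lemma_mu_to_lambda_2}. Writing $\vect(xx^{\prime\top}) = x^\prime \otimes x$ and using the mixed-product property $(A\otimes B)(C\otimes D) = (AC)\otimes(BD)$, I would first record the identity
\[
\Sigma_{\mathcal{Z}}(\lambda^\star) = \sum_{x,x^\prime \in \mathcal{X}} \mu^\star_x \mu^\star_{x^\prime}\, \vect(xx^{\prime\top})\vect(xx^{\prime\top})^\top = \Sigma_{\mathcal{X}}(\mu^\star) \otimes \Sigma_{\mathcal{X}}(\mu^\star),
\]
where $\Sigma_{\mathcal{X}}(\mu^\star) = \sum_x \mu^\star_x xx^\top$, which is positive definite since $\mathcal{X}$ spans $\mathbb{R}^d$. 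Because the eigenvalues of a Kronecker product are the pairwise products of the factors' eigenvalues, this gives $\|\Sigma_{\mathcal{Z}}(\lambda^\star)^{-1}\| = \|\Sigma_{\mathcal{X}}(\mu^\star)^{-1}\|^2$. Applying the same computation with uniform weights yields $\frac{1}{K^2}\sum_{z\in\mathcal{Z}} zz^\top = \Lambda \otimes \Lambda$, where $\Lambda \triangleq \frac{1}{K}\sum_{x\in\mathcal{X}} xx^\top$, hence $\nu_{\mathrm{min}} = \lambda_{\mathrm{min}}(\Lambda)^2$. At this point the lemma reduces to the single node-arm-level claim $\|\Sigma_{\mathcal{X}}(\mu^\star)^{-1}\| \le d/\lambda_{\mathrm{min}}(\Lambda)$.

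To establish this reduced inequality I would invoke the G-optimality of $\mu^\star$. By the Equivalence Theorem \citep{KieferEquivalenceThm1960}, $\max_{x\in\mathcal{X}} x^\top \Sigma_{\mathcal{X}}(\mu^\star)^{-1} x = d$, so averaging this bound over $\mathcal{X}$ gives
\[
\Tr\!\left(\Sigma_{\mathcal{X}}(\mu^\star)^{-1}\Lambda\right) = \frac{1}{K}\sum_{x\in\mathcal{X}} x^\top \Sigma_{\mathcal{X}}(\mu^\star)^{-1} x \le d.
\]
Combining this with the elementary trace inequality $\Tr(AB) \ge \lambda_{\mathrm{min}}(B)\,\Tr(A)$ for positive semidefinite $A,B$ (taking $A = \Sigma_{\mathcal{X}}(\mu^\star)^{-1}$ and $B = \Lambda$) yields $\lambda_{\mathrm{min}}(\Lambda)\,\Tr(\Sigma_{\mathcal{X}}(\mu^\star)^{-1}) \le d$. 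Finally, bounding the spectral norm of a positive definite matrix by its trace, $\|\Sigma_{\mathcal{X}}(\mu^\star)^{-1}\| \le \Tr(\Sigma_{\mathcal{X}}(\mu^\star)^{-1})$, delivers $\|\Sigma_{\mathcal{X}}(\mu^\star)^{-1}\| \le d/\lambda_{\mathrm{min}}(\Lambda)$. Squaring and using the first paragraph then concludes $\|\Sigma_{\mathcal{Z}}(\lambda^\star)^{-1}\| \le d^2/\nu_{\mathrm{min}}$.

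I expect the main (if modest) obstacle to be the reduction step itself, namely recognizing that both $\Sigma_{\mathcal{Z}}(\lambda^\star)$ and the normalized second-moment matrix $\frac{1}{K^2}\sum_z zz^\top$ factor as Kronecker squares, which is precisely what turns the $d^2$-scale quantities into $d$-scale ones and is the source of the clean squaring $d^2/\nu_{\mathrm{min}} = (d/\sqrt{\nu_{\mathrm{min}}})^2$. Once that factorization is in hand, the remainder is a short chain of standard trace and eigenvalue inequalities, whose only genuine input is the G-optimality value $d$ used to link $\Sigma_{\mathcal{X}}(\mu^\star)^{-1}$ to the averaged design matrix $\Lambda$.
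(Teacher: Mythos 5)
Your proof is correct, but it takes a genuinely different route from the paper's. You exploit the Kronecker factorizations $\Sigma_{\mathcal{Z}}(\lambda^\star) = \Sigma_{\mathcal{X}}(\mu^\star)\otimes\Sigma_{\mathcal{X}}(\mu^\star)$ and $\frac{1}{K^2}\sum_{z\in\mathcal{Z}} zz^\top = \Lambda\otimes\Lambda$ to collapse everything to dimension $d$, and then finish with the Equivalence Theorem at the node-arm level (value $d$) plus two elementary facts, $\Tr(AB)\ge\lambda_{\mathrm{min}}(B)\Tr(A)$ and $\|A\|\le\Tr(A)$ for positive semidefinite matrices. The paper instead works directly in $\mathbb{R}^{d^2}$: it expresses an arbitrary unit vector as $z=\mathbf{Z}^\top\beta^{(z)}$ with $\beta^{(z)}$ the least-squares coefficients, bounds $\|\beta^{(z)}\|_1^2\le K^2/\sigma_{\min}(\mathbf{Z})^2$ via the pseudo-inverse, and invokes the Equivalence Theorem at the edge-arm level (value $d^2$), with $\nu_{\mathrm{min}}$ entering through $\sigma_{\min}(\mathbf{Z})^2=K^2\nu_{\mathrm{min}}$; both routes produce the identical constant $d^2/\nu_{\mathrm{min}}$. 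What your approach buys: it is shorter, every step is elementary, and it makes transparent why the bound is a perfect square, $d^2/\nu_{\mathrm{min}}=(d/\sqrt{\nu_{\mathrm{min}}})^2$. What the paper's approach buys: it never uses the product form of $\lambda^\star$, so it holds verbatim for \emph{any} minimizer of $h_{\mathcal{Z}}$, whereas your factorization of $\Sigma_{\mathcal{Z}}(\lambda^\star)$ is only immediate for the specific product distribution $\lambda^\star_z=\mu^\star_x\mu^\star_{x^\prime}$ of Theorem~\ref{theorem:mu_to_lambda}. This restriction is harmless here, since the convergence analysis (Theorem~\ref{theorem:convergence_2}) applies the lemma exactly to that product distribution (it is the law of the sampled edge-arms, i.e.\ $\bbE\mathbf{A}_t = mt\,\Sigma_{\mathcal{Z}}(\lambda^\star)$ with the product $\lambda^\star$); if you wanted your argument to cover an arbitrary minimizer, you would additionally note that the optimal information matrix is unique (G-optimality coincides with D-optimality, whose objective is strictly concave over information matrices), so the Kronecker factorization in fact holds at every optimum.
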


\begin{proof}

Define $\mathcal{B}=\left\{ z\in\mathbb{R}^{d^{2}}: \|z\| =1\right\} $. First, for any semi-definite matrix $\mathbf{A}\in\mathbb{R}^{d^{2}\times d^{2}}$,
we have $ \left\|\mathbf{A}\right\| =\max_{z\in\mathcal{B}} \; z^{\top}\mathbf{A}z$.
Because $\Sigma_{\mathcal{Z}}(\lambda^\star)^{-1}$ is positive definite and symmetric, and by Rayleigh-Ritz theorem,
\begin{align*}
 \left\| \Sigma_{\mathcal{Z}}(\lambda^\star)^{-1}\right\|  & =\max_{z\in\mathcal{B}} \; \frac{z^{\top} \Sigma_{\mathcal{Z}}(\lambda^\star)^{-1} z}{z^\top z} =\max_{z\in\mathcal{B}} \; z^{\top} \Sigma_{\mathcal{Z}}(\lambda^\star)^{-1} z \enspace.
\end{align*}
Let $\mathbf{Z}\in\mathbb{R}^{K^{2}\times d^{2}}$ be the matrix whose
rows are vectors of $\mathcal{Z}$ in an arbitrary order. Notice that $\mathcal{Z}$ spans $\bbR^{d^2}$, since $\mathcal{X}$ spans $\bbR^d$.  Now for
any $z\in\mathcal{B}$, define $\beta^{(z)}\in\mathbb{R}^{K^{2}}$
as a vector such that $z=\mathbf{Z}^{\top}\beta^{(z)}$ . Then,
\begin{align*}
 \left\| \Sigma_{\mathcal{Z}}(\lambda^\star)^{-1}\right\|  & =\max_{z\in\mathcal{B}} \; \beta^{(z)^{\top}}\mathbf{Z} \Sigma_{\mathcal{Z}}(\lambda^\star)^{-1} \mathbf{Z}^{\top}\beta^{(z)}\\
 & =\max_{z\in\mathcal{B}} \; \sum_{i=1}^{d^{2}}\sum_{j=1}^{d^{2}}\beta_{i}^{(z)}\beta_{j}^{(z)}z_{i}^{\top}  \Sigma_{\mathcal{Z}}(\lambda^\star)^{-1} z_{j}\\
 &\le \max_{z\in\mathcal{B}} \; \left\|\beta^{(z)}\right\|_{1}^2 \times\max_{i,j} \; z_{i}^{\top} \Sigma_{\mathcal{Z}}(\lambda^\star)^{-1} z_{j} \enspace.
\end{align*}
Define $\tilde{z}_{i}=\Sigma_{\mathcal{Z}}(\lambda^\star)^{-\frac{1}{2}}z_{i}$. Clearly, $\max_{i,j} \; z_{i}^{\top} \Sigma_{\mathcal{Z}}(\lambda^\star)^{-1}  z_{j} = \max_{i,j} \; \tilde{z}_{i}^{\top} \tilde{z}_{j} = \max_{i} \; \tilde{z}_{i} ^2$.
So we have
\begin{align*}
 \left\| \Sigma_{\mathcal{Z}}(\lambda^\star)^{-1}\right\|  
 &\le \max_{z\in\mathcal{B}} \; \left\|\beta^{(z)}\right\|_{1}^2 \times\max_{z'\in\mathcal{Z}} \; z^{\prime\top} \Sigma_{\mathcal{Z}}(\lambda^\star)^{-1} z'\\
 &\le \max_{z\in\mathcal{B}} \; \left\|\beta^{(z)}\right\|_{1}^2 d^2 \enspace.
\end{align*}
The last inequality comes from Kiefer and Wolfowitz equivalence theorem \citep{KieferEquivalenceThm1960}.
Now observe that $\beta^{(z)}$ can be obtained by least square regression : $\beta^{(z)}=\left(\mathbf{Z}\mathbf{Z}^{\top}\right)^{-1}\mathbf{Z}z=\left(\mathbf{Z}^{\top}\right)^{\dagger}z$
where $\left(\cdot\right)^{\dagger}$ is the Moore-Penrose pseudo-inverse.
Note that $\mathbf{Z}\mathbf{Z}^{\top}$ is a Gram matrix. It is known
that for a matrix having singular values $\left\{ \sigma_{i}\right\}_i $,
its pseudo-inverse has singular values $\begin{cases}
\frac{1}{\sigma_{i}} & \text{if }\sigma_{i}\neq0\\
0 & \text{otherwise}
\end{cases}$ for all $i$. So for $z\in\mathcal{B}$, we have: 
\begin{align*}
 \left\|\beta^{(z)}\right\|_{1}^{2} & \le K^{2} \left\|\beta^{(z)} \right\|_{2}^{2}\le K^{2} \left\|\left(\mathbf{Z}^{\top}\right)^{\dagger}\right\|^{2}\le\frac{K^{2}}{\sigma_{\mathrm{min}}\left(\mathbf{Z}\right)^{2}} \enspace,
\end{align*}
where $\sigma_{\mathrm{min}}\left(\cdot\right)$ refers to the smallest singular value. Let $\nu_{\mathrm{min}}\left(\cdot\right)$ refer to the smallest eigenvalue. Noting that
\[
\sigma_{\mathrm{min}}\left(\mathbf{Z}\right)^{2}=\nu_{\mathrm{min}}\left(\mathbf{Z}^{\top}\mathbf{Z}\right)=K^{2}\nu_{\mathrm{min}}\left(\frac{1}{K^{2}}\sum_{z\in\mathcal{Z}}zz^{\top}\right) \enspace,
\]
yields the desired result.

\end{proof}

We are now ready to state the bound on the random sampling error, relatively to the objective value $\Sigma_{\mathcal{Z}}(\lambda^\star)$ of the convex relaxation solution.
\begin{theorem}
    \label{theorem:convergence_2}
    Let $\lambda^\star \in \mathcal{S}_{\mathcal{Z}}$ be a minimizer of $h_{\mathcal{Z}}$. Let $0 \leq \delta \leq 1$ and let $t_0 > 0$ be such that
    \[
    t_0 = 2L d^2 \log(2d^2/\delta)/\nu_{\mathrm{min}} \enspace,
    \]
    where $L = \max_{z \in \mathcal{Z}} \|z\|^2$ and $\nu_{\mathrm{min}}$ is the smallest eigenvalue of the covariance matrix $\frac{1}{K^{2}}\sum_{z\in\mathcal{Z}}z^{\top}z$.
    Then, at each round $t \geq t_0$, with probability at least $1 - \delta$, the randomized G-allocation strategy for graphical bilinear bandit in Algorithm~\ref{algorithm:Randomized_G_allocation_for_GBB} produces a matrix $\mathbf{A}_t$ such that:
    \begin{align*}
        h_{\mathcal{Z}}(\mathbf{A}_t) \leq (1 + \alpha) h_{\mathcal{Z}}(mt \times \Sigma_{\mathcal{Z}}(\lambda^{\star})) \enspace,
    \end{align*}
    where 
    \begin{align*}
        \alpha = \frac{L d^2}{m\nu_{\mathrm{min}}^2} \sqrt{\frac{2v}{t}\log\left(\frac{2d^2}{\delta}\right)} + o\left(\frac{1}{\sqrt{t}}\right),
    \end{align*}
    and $v \triangleq \| \bbE\big[(\mathbf{A}_1 - \bbE\mathbf{A}_1)^2 \big] \|$.
\end{theorem}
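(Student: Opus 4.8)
The plan is to make the three-factor decomposition sketched just before the theorem fully quantitative. For every $z \in \mathcal{Z}$ I would write $z^\top \mathbf{A}_t^{-1} z = z^\top\big(\mathbf{A}_t^{-1} - (\bbE\mathbf{A}_t)^{-1}\big)z + z^\top(\bbE\mathbf{A}_t)^{-1}z$ and maximize over $z$ to get
\[
h_{\mathcal{Z}}(\mathbf{A}_t) \le L\,\big\|\mathbf{A}_t^{-1} - (\bbE\mathbf{A}_t)^{-1}\big\| + h_{\mathcal{Z}}(\bbE\mathbf{A}_t),
\]
with $L = \max_{z}\|z\|^2$ and $\bbE\mathbf{A}_t = mt\,\Sigma_{\mathcal{Z}}(\lambda^\star)$. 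Since the relative error is measured against $h_{\mathcal{Z}}(mt\,\Sigma_{\mathcal{Z}}(\lambda^\star))$, it then suffices to bound $L\,\|\mathbf{A}_t^{-1} - (\bbE\mathbf{A}_t)^{-1}\|$ by $\alpha\, h_{\mathcal{Z}}(mt\,\Sigma_{\mathcal{Z}}(\lambda^\star))$, and here I would use that by the equivalence theorem $h_{\mathcal{Z}}(mt\,\Sigma_{\mathcal{Z}}(\lambda^\star)) = d^2/(mt)$.

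Next I would apply the resolvent identity $\mathbf{A}_t^{-1} - (\bbE\mathbf{A}_t)^{-1} = \mathbf{A}_t^{-1}(\bbE\mathbf{A}_t - \mathbf{A}_t)(\bbE\mathbf{A}_t)^{-1}$, so that $\|\mathbf{A}_t^{-1} - (\bbE\mathbf{A}_t)^{-1}\| \le \|\mathbf{A}_t^{-1}\|\,\|\mathbf{A}_t - \bbE\mathbf{A}_t\|\,\|(\bbE\mathbf{A}_t)^{-1}\|$, and control each factor. The deterministic factor is $\|(\bbE\mathbf{A}_t)^{-1}\| = \tfrac{1}{mt}\|\Sigma_{\mathcal{Z}}(\lambda^\star)^{-1}\| \le \tfrac{d^2}{mt\,\nu_{\mathrm{min}}}$ by Lemma~\ref{lemma:bound_cov_norm}. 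For the deviation, I would apply Proposition~\ref{prop:chernoff-matrix_2_bis} to the i.i.d.\ summands $\mathbf{Z}_s = \sum_{(i,j)\in E}\vect(X_s^{(i)}X_s^{(j)\top})\vect(X_s^{(i)}X_s^{(j)\top})^\top$, after checking that each $\mathbf{Z}_s$ is positive semidefinite with $\|\mathbf{Z}_s\| \le mL$ (a sum of $m$ rank-one terms of norm at most $L$); with $u = \log(2d^2/\delta')$ this gives $\|\mathbf{A}_t - \bbE\mathbf{A}_t\| \le \sqrt{2tvu} + \tfrac{mLu}{3}$ with probability $\ge 1-\delta'$. For $\|\mathbf{A}_t^{-1}\| = 1/\lambda_{\mathrm{min}}(\mathbf{A}_t)$ I would invoke Proposition~\ref{prop:chernoff-matrix_2} to get $\lambda_{\mathrm{min}}(\mathbf{A}_t) \ge (1-\varepsilon)\,mt\,\lambda_{\mathrm{min}}(\Sigma_{\mathcal{Z}}(\lambda^\star))$ with probability $\ge 1-\delta''$, and combine it once more with Lemma~\ref{lemma:bound_cov_norm} (via $\lambda_{\mathrm{min}}(\Sigma_{\mathcal{Z}}(\lambda^\star)) \ge \nu_{\mathrm{min}}/d^2$) to obtain $\|\mathbf{A}_t^{-1}\| \le \tfrac{d^2}{(1-\varepsilon)\,mt\,\nu_{\mathrm{min}}}$; applying the lemma in both inverse factors is exactly what produces the $\nu_{\mathrm{min}}^2$ in $\alpha$.

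Setting $\delta' = \delta'' = \delta/2$ and taking a union bound, multiplying the three factors gives
\[
L\,\big\|\mathbf{A}_t^{-1} - (\bbE\mathbf{A}_t)^{-1}\big\| \le \frac{Ld^2}{mt\,\nu_{\mathrm{min}}^2}\cdot\frac{1}{1-\varepsilon}\Big(\sqrt{2tvu} + \tfrac{mLu}{3}\Big),
\]
and dividing by $d^2/(mt)$ yields $\alpha = \tfrac{Ld^2}{m\nu_{\mathrm{min}}^2}\cdot\tfrac{1}{1-\varepsilon}\big(\sqrt{2vu/t} + \tfrac{mLu}{3t}\big)$. The threshold $t_0 = 2Ld^2\log(2d^2/\delta)/\nu_{\mathrm{min}}$ is precisely what forces $\varepsilon < 1$ (so $\mathbf{A}_t$ is invertible and the Chernoff event is informative): substituting $\lambda_{\mathrm{min}}(\bbE\mathbf{Z}_1) \ge m\nu_{\mathrm{min}}/d^2$ into the Chernoff exponent and demanding failure probability $\delta/2$ gives $\varepsilon^2 \le 2Ld^2\log(2d^2/\delta)/(t\,\nu_{\mathrm{min}})$, which is below $1$ exactly when $t \ge t_0$. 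Since then $\varepsilon = O(1/\sqrt t)$, both the $1/(1-\varepsilon) = 1 + O(1/\sqrt t)$ correction and the $\tfrac{mLu}{3t}$ term are $o(1/\sqrt t)$ next to the $\sqrt{2vu/t}$ term, leaving the announced leading order.

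The main obstacle I anticipate is the bookkeeping of this last step: one must simultaneously track the two inverse-norm bounds (each contributing a factor $d^2/(mt\,\nu_{\mathrm{min}})$), prevent the $(1-\varepsilon)^{-1}$ Chernoff correction from polluting the leading coefficient, and verify that everything subleading genuinely collapses into the $o(1/\sqrt t)$ remainder. The remaining ingredients—checking $\mathbf{0}\preceq\mathbf{Z}_s\preceq mL\mathbf{I}$ as required by both concentration propositions, and identifying $\mathbf{Z}_1 = \mathbf{A}_1$ so that the variance proxy $v$ in Proposition~\ref{prop:chernoff-matrix_2_bis} coincides with $v = \|\bbE[(\mathbf{A}_1-\bbE\mathbf{A}_1)^2]\|$ in the statement—are routine but should be stated explicitly.
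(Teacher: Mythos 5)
Your proposal is correct and follows essentially the same route as the paper's proof: the same split of $h_{\mathcal{Z}}(\mathbf{A}_t)$ around $(\bbE\mathbf{A}_t)^{-1} = (mt\,\Sigma_{\mathcal{Z}}(\lambda^\star))^{-1}$, the resolvent identity with the three-factor bound, Proposition~\ref{prop:chernoff-matrix_2} for $\|\mathbf{A}_t^{-1}\|$, Proposition~\ref{prop:chernoff-matrix_2_bis} for $\|\mathbf{A}_t-\bbE\mathbf{A}_t\|$, Lemma~\ref{lemma:bound_cov_norm} applied to both inverse factors (producing the $\nu_{\mathrm{min}}^2$), a union bound with $\delta/2$ each, and the Kiefer--Wolfowitz value $d^2/(mt)$ to convert to a relative error. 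Your explicit derivation of how $t_0$ guarantees $\varepsilon<1$ in the Chernoff factor is in fact slightly more detailed than the paper's treatment of that point.
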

\begin{proof}
    Let $(X^{(1)}_{s})_{s = 1,\dots, t}, \ldots, (X^{(n)}_s)_{s = 1, \ldots, t}$ be $nt$ i.i.d.\ random vectors in $\mathbb{R}^{d}$ such that for all $x \in \mathcal{X}$, $\mathbb{P}\left(X^{(1)}_1 = x\right) = \mu^\star_x$. For $(i, j) \in E$ and $1 \leq s \leq t$, we define the random matrix $\mathbf{Z}^{(i,j)}_{s}$ by 
    \begin{align*}
        \mathbf{Z}^{(i,j)}_{s} = \vect{\left(X^{i}_{s}X^{j\top}_{s}\right)} \vect{\left(X^{i}_{s}X^{j\top}_{s}\right)}^\top \enspace.
    \end{align*}
    Finally, let us define for all $1 \leq s \leq t$, the edge-wise sum $\mathbf{Z}_s \in \mathbb{R}^{d^2 \times d^2}$, that is
    \begin{align*}
    \mathbf{Z}_{s} = \sum_{(i,j)\in E} \mathbf{Z}^{(i,j)}_{s} \enspace.
    \end{align*}
  One can easily notice that $\mathbf{Z}_1, \ldots, \mathbf{Z}_{t}$ are i.i.d.\ random matrices. We define the overall sum $\mathbf{A}_t = \sum_{s=1}^{t} \mathbf{Z}_{s}$ and our goal is to measure how close $f_{\mathcal{Z}}(\mathbf{A}_t)$ is to $f_{\mathcal{Z}}(mt \times \Sigma_{\mathcal{Z}}(\lambda^{\star}))$, where $mt$ corresponds to the total number of sampled arms $z \in \mathcal{Z}$ during the $t$ rounds of the learning procedure. By definition of $\mathbf{A}_t$, one has
  \begin{align*}
      \max_{z \in \mathcal{Z}} \; z^\top \left(\bbE \mathbf{A}_t\right)^{-1} z &= \max_{z \in \mathcal{Z}} \; z^\top \left( \sum_{s = 1}^t \sum_{(i, j) \in E} \bbE\left[\mathbf{Z}_s^{(i, j)}\right] \right)^{-1} z\\
      &= \max_{z \in \mathcal{Z}} \; z^\top \left( \sum_{s = 1}^t \sum_{(i, j) \in E} \sum_{x, x' \in \mathcal{X}} \mu^\star_x \mu^\star_{x'} \vect{(x x^{\prime\top})}\vect{(x x^{\prime\top})}^\top \right)^{-1} z\\
      &= \max_{z \in \mathcal{Z}} \; z^\top \left( \sum_{s = 1}^t \sum_{(i, j) \in E} \sum_{z^\prime \in \mathcal{Z}} \lambda^\star_{z^\prime} z^\prime z^{\prime\top} \right)^{-1} z\\
      &= f_{\mathcal{Z}}(mt \Sigma_{\mathcal{Z}}(\lambda^\star)) \enspace.
  \end{align*}
    This allows us to bound the relative error as follows:
    \begin{align*}
        \alpha &= \frac{f_{\mathcal{Z}}(\mathbf{A}_t)}{f_{\mathcal{Z}}(mt \times \Sigma_{\mathcal{Z}}(\lambda^{\star}))} - 1 \\
        &= \frac{\max_{z\in\mathcal{Z}} \; z^\top\left(\mathbf{A}_t^{-1} - \left(\mathbb{E}\mathbf{A}_t\right)^{-1} + \left(\bbE\mathbf{A}_t\right)^{-1}\right)z}{f_{\mathcal{Z}}(mt \times \Sigma_{\mathcal{Z}}(\lambda^{\star}))} - 1 \\
        &\leq \frac{\max_{z\in\mathcal{Z}} \; z^\top\left(\mathbf{A}_t^{-1} - \left(\mathbb{E}\mathbf{A}_t\right)^{-1}\right)z}{f_{\mathcal{Z}}(mt \times \Sigma_{\mathcal{Z}}(\lambda^{\star}))} \enspace.
    \end{align*}
    Using the fact that $f_{\mathcal{Z}}(mt \Sigma_{\mathcal{Z}}(\lambda^{\star})) = d^{2}/mt$ \citep{KieferEquivalenceThm1960}, we obtain
    \begin{align*}
       \alpha &\leq \frac{mt}{d^2} \times \max_{z\in\mathcal{Z}} \; z^\top\left(\mathbf{A}_t^{-1} - \left(\mathbb{E}\mathbf{A}_t\right)^{-1}\right)z \\
        &\leq \frac{mt}{d^2} \times \max_{z\in\mathcal{Z}} \; \|z\|^2 \|\mathbf{A}_t^{-1} - \left(\mathbb{E}\mathbf{A}_t\right)^{-1}\| \\
        &\leq \frac{mtL}{d^2} \times \|\mathbf{A}_t^{-1} - \left(\mathbb{E}\mathbf{A}_t\right)^{-1}\| \enspace.
    \end{align*}
    Therefore, controlling the quantity $\|\mathbf{A}_t^{-1} - \left(\mathbb{E}\mathbf{A}_t\right)^{-1}\|$ will allow us to provide an upper bound on the relative error. Notice that 
    \begin{align*}
        \|\mathbf{A}_t^{-1} - \left(\mathbb{E}\mathbf{A}_t\right)^{-1}\| &= \|\mathbf{A}_t^{-1}\left( \mathbb{E}\mathbf{A}_t - \mathbf{A}_t\right) \left(\mathbb{E}\mathbf{A}_t\right)^{-1}\| \\
        &\leq \|\mathbf{A}_t^{-1}\| \ \| \mathbb{E}\mathbf{A}_t - \mathbf{A}_t\| \  \|\left(\mathbb{E}\mathbf{A}_t\right)^{-1}\| \enspace.
    \end{align*}
    Using Proposition~\ref{prop:chernoff-matrix_2}, we know that for any $d^{2}e^{-\frac{t\lambda_{\mathrm{min}}(\mathbb{E}\mathbf{Z}_1)}{mL}}< \delta_h < 1$, the following holds:
    \begin{align*}
     \| \mathbf{A}_t^{-1} \| \leq \frac{\| \left(\mathbb{E}\mathbf{A}_t\right)^{-1} \|}{1 - \sqrt{\frac{2mL}{t} \| \left(\mathbb{E}\mathbf{Z}_1\right)^{-1} \| \log(d^2/\delta_h)}} \enspace,
    \end{align*}
    with probability at least $1 - \delta_h$. Similarly, using Proposition~\ref{prop:chernoff-matrix_2_bis}, for any $0 < \delta_b < 1$, we have
    \[
      \|\mathbf{A}_t - \mathbb{E}\mathbf{A}_t\| \leq \frac{mL}{3}\log\frac{d^2}{\delta_b} + \sqrt{2t v^2\log\frac{d^2}{\delta_b}} \enspace,
    \]
    with probability at least $1 - \delta_b$. Combining these two results with a union bound leads to the following bound, with probability $1 - (\delta_b + \delta_h)$:
    \begin{align*}
      \left\|\mathbf{A}_t^{-1} - \left(\mathbb{E}\mathbf{A}_t\right)^{-1} \right\| \leq \left\| \left(\mathbb{E}\mathbf{A}_t\right)^{-1} \right\|^2\frac{(mL/3)\log(d^2/\delta_b) + \sqrt{2tv\log(d^2/\delta_b)}}{1 - \sqrt{(2mL/t) \left\| \left(\mathbb{E}\mathbf{Z}_1\right)^{-1} \right\| \log(d^2/\delta_h)}} \enspace.
    \end{align*}
    In order to obtain a unified bound depending on one confidence parameter $1 - \delta$, one could optimize over $\delta_b$ and $\delta_h$, subject to $\delta_b + \delta_h = \delta$. This leads to a messy result and a negligible improvement. One can use simple values $\delta_b = \delta_h = \delta / 2$, so the overall bound becomes, with probability $1 - \delta$:
    \begin{align*}
      \|\mathbf{A}_t^{-1} - \left(\mathbb{E}\mathbf{A}_t\right)^{-1}\|
      \leq \frac{1}{t m^2} \left\|\Sigma_{\mathcal{Z}}(\lambda^\star)^{-1}\right\|^2\sqrt{\frac{2v}{t}\log\left(\frac{2d^2}{\delta}\right)}
      \left(\frac{1 + \sqrt{ \frac{m^2L^2\log(2d^2/\delta)}{18v t}}}{1 - \sqrt{\frac{2L\|\Sigma_{\mathcal{Z}} (\lambda^\star)^{-1}\| \log(2d^2/\delta)}{t}}}\right) \enspace.
    \end{align*}
    This can finally be formulated as follows:
    \begin{align*}
      \left\|\mathbf{A}_t^{-1} - \left(\mathbb{E}\mathbf{A}_t\right)^{-1}\right\|
      \leq \frac{1}{tm^2}\left\|\Sigma_{\mathcal{Z}}(\lambda^\star)^{-1}\right\|^2\sqrt{\frac{2v}{t}\log\left(\frac{2d^2}{\delta}\right)} + o\left(\frac{1}{t\sqrt{t}}\right)\enspace.
    \end{align*}
    Using the obtained bound on $\|\mathbf{A}_t^{-1} - \mathbb{E}(\mathbf{A}_t)^{-1}\|$ yields
    \begin{align*}
        \frac{f_{\mathcal{Z}}(\mathbf{A}_t)}{f_{\mathcal{Z}}(mt \times \Sigma_{\mathcal{Z}}(\lambda^{\star}))} - 1 &\leq \frac{mtL}{d^2} \times \left(\frac{1}{tm^2}\left\|\Sigma_{\mathcal{Z}}(\lambda^{\star})^{-1}\right\|^2\sqrt{\frac{2v}{t}\log\left(\frac{2d^2}{\delta}\right)} + o\left(\frac{1}{t\sqrt{t}}\right)\right) \\
        &\leq \frac{L}{md^2} \left\|\Sigma_{\mathcal{Z}}(\lambda^{\star})^{-1}\right\|^2\sqrt{\frac{2v}{t}\log\left(\frac{2d^2}{\delta}\right)} + o\left(\frac{1}{\sqrt{t}}\right)\enspace,
    \end{align*}
    By noticing that $f_{\mathcal{Z}}(mt \times \Sigma_{\mathcal{Z}}(\lambda^{\star})) \leq f_{\mathcal{Z}}(\mathbf{A}^{\star}_t)$ and by using Lemma \ref{lemma:bound_cov_norm}, the result holds.
\end{proof}

\section{Variance analysis}
\label{sec:supp_variance}
\paragraph{Star graph.}
The covariance matrix of the star graph can be bounded as follows:
\begin{align*}
    \var(\mathbf{A}_1) \preceq m \times P \cdot \mathbf{I} + (n_{\mathrm{S}} - 1)(n_{\mathrm{S}} - 2) M \cdot \mathbf{I} + n_{\mathrm{S}}(n_{\mathrm{S}} - 1) N \cdot \mathbf{I} \enspace.
\end{align*}
Since the star graph of $m$ edges has a number of nodes $n_{\mathrm{S}} = m/2 + 1$, we have
\begin{align*}
    \|\var(\mathbf{A}_1)\| \leq m \times P + (M + N) \times O\left(m^2\right)  \enspace.
\end{align*}

\paragraph{Complete graph.} As for the star graph,
\begin{align*}
    \var(\mathbf{A}_1) \preceq m \times P\cdot \mathbf{I} + n_{\mathrm{Co}}(n_{\mathrm{Co}}-1)(n_{\mathrm{Co}}-2) M\cdot \mathbf{I}+ n_{\mathrm{Co}}(n_{\mathrm{Co}}-1)(n_{\mathrm{Co}}-1) N \cdot \mathbf{I} \enspace.
\end{align*}
Since the complete graph of $m$ edges has a number of nodes $n_{\mathrm{Co}} = \left(1 + \sqrt{4m+1}\right)/2$, we have
\begin{align*}
    \|\var(\mathbf{A}_1)\| \leq m \times P + (M + N) \times O\left(m\sqrt{m}\right) \enspace.
\end{align*}

\paragraph{Circle graph.}
Again,
\begin{align*}
    \var(\mathbf{A}_1) \preceq m \times P \cdot \mathbf{I}  + 2n_{\mathrm{Ci}} M \cdot \mathbf{I} + 4n_{\mathrm{Ci}} N \cdot \mathbf{I} \enspace.
\end{align*}
Since the circle graph of $m$ edges has a number of nodes $n_{\mathrm{Ci}} = m/2$, we have
\begin{align*}
    \|\var(\mathbf{Z}_1)\| \leq m \times P + (M + N) \times O\left(m\right)  \enspace.
\end{align*}

\paragraph{Matching graph.}
Finally,
\begin{align*}
    \var(\mathbf{A}_1) \preceq m \times P \cdot \mathbf{I}  + n_{\mathrm{M}} N \cdot \mathbf{I} \enspace.
\end{align*}
Since the matching graph of $m$ edges has a number of nodes $n_{\mathrm{M}} = m$, we have
\begin{align*}
    \|\var(\mathbf{A}_1)\| \leq m \times P + m \times N \enspace.
\end{align*}

\section{Generalization}
\label{sec:supp_generalization}

In this section, we provide some insights into the generalization to broader reward settings.

\subsection{When \texorpdfstring{$\mathbf{M}_\star$}{M} is not symmetric}

Consider the same graphical bilinear bandit setting as the one explained in the paper with the only difference that $\mathbf{M}_\star$ is not symmetric. We recall here that in the graph $\mathcal{G}=(V,E)$ associated to the graphical bilinear bandit setting, $(i,j) \in E$ if and only if $(j,i) \in E$. Hence, for a given allocation $(x^{(1)}, \dots, x^{(n)}) \in \mathcal{X}^n$, one can write the associated expected global reward as follows : 

\begin{align*}
    \sum_{(i,j) \in E} x^{(i)\top} \mathbf{M}_\star x^{(j)} & = \sum_{i = 1}^{n} \sum_{\substack{j \in \mathcal{N}(i)\\ j > i}} x^{(i)\top} \mathbf{M}_\star x^{(j)} + x^{(j)\top} \mathbf{M}_\star x^{(i)} \\
    &= \sum_{i = 1}^{n} \sum_{\substack{j \in \mathcal{N}(i)\\ j > i}} x^{(i)\top} \mathbf{M}_\star x^{(j)} + \left(x^{(j)\top} \mathbf{M}_\star x^{(i)}\right)^\top \\
    &= \sum_{i = 1}^{n} \sum_{\substack{j \in \mathcal{N}(i)\\ j > i}} x^{(i)\top} \mathbf{M}_\star x^{(j)} + x^{(i)\top} \mathbf{M}_\star^\top x^{(j)} \\
    &= \sum_{i = 1}^{n} \sum_{\substack{j \in \mathcal{N}(i)\\ j > i}} x^{(i)\top} \left(\mathbf{M}_\star x^{(j)} + \mathbf{M}_\star^\top x^{(j)}\right) \\
    &= \sum_{i = 1}^{n} \sum_{\substack{j \in \mathcal{N}(i)\\ j > i}} x^{(i)\top} \left(\mathbf{M}_\star + \mathbf{M}_\star^\top\right) x^{(j)} \enspace.
\end{align*}
Let us denote $\bar{\mathbf{M}}_\star = \mathbf{M}_\star + \mathbf{M}_\star^\top$. One can notice that $\bar{\mathbf{M}}_\star$ is symmetric.
Solving the graphical bilinear bandit with the matrix $\bar{\mathbf{M}}_\star$ is exactly what we propose throughout the main paper. 

\subsection{When the reward captures more information than the interactions between agents}

Consider the real world problems introduced in the paper, but with the difference that instead of a reward only related to the interaction between two neighboring agents/nodes, there is an additional term that informs about the absolute quality of the arm chosen by the agent itself. More formally we consider the following reward $r^{(i,j)}_t$ for the node $i$:
\begin{align*}
r^{(i,j)}_{t} = x^{(i)\top}_t \mathbf{M}_\star x^{(j)}_t + x^{(i)\top}_t \beta_\star + \eta^{(i,j)}_t \enspace. \label{eq:reward_2}
\end{align*}
where $\beta_\star \in \mathbb{R}^d$ is a second unknown parameter that allows to capture the quality of the arm chosen by the node $i$ independently of its neighbors.

In order to add a constant term in the reward, let us construct the set $\Tilde{\mathcal{X}} \subset \mathbb{R}^{d+1} $ such that each arm $x \in \mathcal{X}$ is associated to a new arm $\Tilde{x} \in \Tilde{\mathcal{X}}$ defined as $\Tilde{x}^\top = (x^\top, 1)$.
Moreover, let us define the matrix $\Tilde{\mathbf{M}}^\star \in \mathbb{R}^{(d+1) \times (d+1)}$ as follows:

\begin{align*}
    \Tilde{\mathbf{M}}_\star = \begin{pmatrix}
        \left[\begin{matrix} & & \\ & \mathbf{M}_\star & \\ & & \end{matrix}\right] \left[\begin{matrix} \\ \beta_\star \\ \\ \end{matrix}\right] \\[0.6cm]
        \left[\begin{matrix} 0 \ \ \ & \cdots & \ \  \ 0 \end{matrix}\right]
    \end{pmatrix} \enspace.
\end{align*}

One can easily verify that for any edge $(i,j) \in E$ and any time step $t$, the reward $r^{(i,j)}_{t}$ can now be written as follows:
\begin{align*}
r^{(i,j)}_{t} &= \Tilde{x}^{(i)\top}_t \Tilde{\mathbf{M}}_{\star} \Tilde{x}^{(j)}_t + \eta^{(i,j)}_t \enspace,
\end{align*}
which leads to the same graphical bilinear bandit setting explained in Section \ref{sec:preliminaries}, this time in dimension $d+1$ instead of $d$. Hence, all the previous results hold for this more general graphical bilinear bandit problem, provided any dependence in $d$ is modified to $d + 1$.

\section{Computing \texorpdfstring{$\mu^\star$}{mu}}
\label{sec:supp_experiments}

In Algorithm~\ref{algorithm:Randomized_G_allocation_for_GBB}, we need to find the solution $\mu_\star$ of $\min_{\mu \in \mathcal{S}_{\mathcal{X}}} h_{\mathcal{X}}(\mu)$. In fact we need $\mu_\star$ to sample from it.
We show that for any set $X$, the function $h_X$ is convex and we use the Frank-Wolfe algorithm \citep{frank1956algorithm} to compute $\mu_\star$ and $\lambda_\star$. The convergence of the algorithm has been proven in \citet{damla2008linear}. Note that one can only compute $\mu_\star$ or $\lambda_\star$ to obtain the other one thanks to \ref{theorem:mu_to_lambda_2}.

\begin{prop}
\label{f_is_convex}
Let $d>0$, for any set $X \subset \mathbb{R}^{d}$, $h_X$ is convex.
\end{prop}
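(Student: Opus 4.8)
The plan is to write $h_X$ as a pointwise maximum, over the finitely many $x' \in X$, of functions that are each convex in $\lambda$, and to obtain the convexity of each such function through a variational (dual) representation that also gracefully accounts for the $+\infty$ convention when $\Sigma_X(\lambda)$ is singular. First I would observe that the map $\lambda \mapsto \Sigma_X(\lambda) = \sum_{x \in X} \lambda_x x x^\top$ is affine (indeed linear) in $\lambda$, so that composing any convex function of the matrix argument with $\Sigma_X$ preserves convexity in $\lambda$.

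The key step is the identity, valid for any symmetric positive-definite $\mathbf{Q}$ and any $v \in \mathbb{R}^d$,
\[
v^\top \mathbf{Q}^{-1} v = \sup_{w \in \mathbb{R}^d} \left( 2 w^\top v - w^\top \mathbf{Q} w \right),
\]
the supremum being attained at $w = \mathbf{Q}^{-1} v$. For each fixed $w$ the inner expression is affine in $\mathbf{Q}$, so the right-hand side, as a supremum of affine functions of $\mathbf{Q}$, is a convex (and lower-semicontinuous) extended-real-valued function on the cone of symmetric matrices; denote it $\phi_v(\mathbf{Q})$. Precomposing with the affine map $\Sigma_X$ shows that $\lambda \mapsto \phi_v(\Sigma_X(\lambda))$ is convex for every fixed $v$, and a finite maximum of convex functions is convex, so $\lambda \mapsto \max_{x' \in X} \phi_{x'}(\Sigma_X(\lambda))$ is convex on $\mathcal{S}_X$.

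It then remains to check that this maximum coincides with $h_X$, including at the boundary of the simplex. When $\Sigma_X(\lambda) \succ 0$ the identity above gives $\phi_{x'}(\Sigma_X(\lambda)) = x'^\top \Sigma_X(\lambda)^{-1} x'$ for each $x'$, so the maximum equals $h_X(\lambda)$. When $\Sigma_X(\lambda)$ is singular, its range is $\mathrm{span}\{x \in X : \lambda_x > 0\}$; if $X$ spans $\mathbb{R}^d$ this range is a proper subspace, hence some $x' \in X$ lies outside it, and for that $x'$ one has $\phi_{x'}(\Sigma_X(\lambda)) = +\infty$ (the supremum diverges along directions $w$ in the kernel of $\Sigma_X(\lambda)$ with $w^\top x' \neq 0$), matching the convention $h_X(\lambda) = +\infty$. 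If $X$ does not span $\mathbb{R}^d$, then $\Sigma_X(\lambda)$ is singular for every $\lambda$ and $h_X \equiv +\infty$ is trivially convex.

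The main delicacy is precisely this boundary bookkeeping: a naive proof via differentiating $\mathbf{Q} \mapsto v^\top \mathbf{Q}^{-1} v$ only covers the interior where $\Sigma_X(\lambda) \succ 0$, whereas the variational representation handles the singular case in one stroke and secures convexity as an extended-valued function on all of $\mathcal{S}_X$. An equivalent route, if one prefers to avoid the dual formula, is to note that the epigraph $\{(\mathbf{Q}, t) : \mathbf{Q} \succeq 0,\ v^\top \mathbf{Q}^{-1} v \le t\}$ equals $\{(\mathbf{Q},t) : \left[\begin{smallmatrix} \mathbf{Q} & v \\ v^\top & t \end{smallmatrix}\right] \succeq 0\}$ by the Schur complement, which is a linear matrix inequality and hence a convex set; convexity of $\phi_v$ follows, and the rest of the argument is unchanged.
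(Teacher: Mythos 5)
Your proof is correct, and it takes a genuinely different route from the paper's. The paper argues along line segments: for $\lambda,\lambda'$ with $\Sigma_X(\lambda)$ and $\Sigma_X(\lambda')$ invertible it sets $\mathbf{Z}(t)=t\,\Sigma_X(\lambda)+(1-t)\,\Sigma_X(\lambda')$, differentiates $t\mapsto x^\top\mathbf{Z}(t)^{-1}x$ twice using the matrix-inverse derivative identity, and writes the second derivative as $2\,y^\top\mathbf{Z}(t)^{-1}y\ge 0$ with $y=\bigl(\partial\mathbf{Z}(t)/\partial t\bigr)\mathbf{Z}(t)^{-1}x$; the singular endpoints are dispatched trivially because the right-hand side of the convexity inequality is then $+\infty$, and finally $h_X$ is a maximum of convex functions. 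You instead avoid all matrix calculus by writing $x'^\top\Sigma_X(\lambda)^{-1}x' = \sup_{w}\bigl(2w^\top x' - w^\top\Sigma_X(\lambda)\,w\bigr)$, a supremum of functions affine in $\lambda$ (since $\lambda\mapsto\Sigma_X(\lambda)$ is linear), hence convex as an extended-real-valued function; the Schur-complement/LMI epigraph view is an equivalent packaging. What your route buys: the $+\infty$ convention at singular $\Sigma_X(\lambda)$ is handled intrinsically rather than by a separate trivial case (the supremum genuinely diverges when some $x'$ leaves the range of $\Sigma_X(\lambda)$, which you correctly identify as $\mathrm{span}\{x : \lambda_x>0\}$), and you in fact establish the stronger statement that $\mathbf{Q}\mapsto v^\top\mathbf{Q}^{-1}v$ is convex and lower-semicontinuous on the whole positive-semidefinite cone. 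What the paper's route buys: an elementary, self-contained computation in the style standard in the optimal-design literature, with no need for duality or extended-valued bookkeeping. One incidental remark: the paper's displayed identity for $\partial\mathbf{Z}(t)^{-1}/\partial t$ is missing a minus sign, which is harmless there since the sign cancels in the second derivative; your approach sidesteps this kind of slip entirely.
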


\begin{proof}
Let $(\lambda, \lambda^\prime) \in \mathcal{S}_{X}^2$ be two distributions in $\mathcal{S}_{X}$.
If either $\Sigma_X(\lambda)$ or $\Sigma_X(\lambda^\prime)$ are not invertible, then for any $t \in [0, 1]$ one has 
\[
h_X(t \lambda + (1 - t) \lambda^\prime) \leq t h_X(\lambda) + (1 - t) h_X(\lambda^\prime) = +\infty \enspace.
\]
Otherwise, for $t \in [0,1]$, we define the positive definite matrix $\mathbf{Z}(t) \in \bbR^{d \times d}$ as follows:
\[
\mathbf{Z}(t) = t \Sigma_X(\lambda)  + (1 - t) \Sigma_X(\lambda^\prime) \enspace.
\]
Simple linear algebra \citep{petersen2012matrix} yields
\[
    \frac{\partial \mathbf{Z}(t)^{-1}}{\partial t} = \mathbf{Z}(t)^{-1} \frac{\partial \mathbf{Z}(t)}{\partial t} \mathbf{Z}(t)^{-1} \enspace.
\]
Using this result and the fact that $\partial^2 \mathbf{Z}(t) / \partial t^2 = 0$, we obtain
\[
    \frac{\partial^2 \mathbf{Z}(t)^{-1}}{\partial t^2} = 2 \mathbf{Z}(t)^{-1} \frac{\partial \mathbf{Z}(t)}{\partial t} \mathbf{Z}(t)^{-1} \frac{\partial \mathbf{Z}(t)}{\partial t} \mathbf{Z}(t)^{-1} \enspace.
\]
%
%
%
Therefore, for any $x \in X$,
\begin{align*}
    \frac{\partial^2 x^\top \mathbf{Z}(t)^{-1} x}{\partial t^2}
    &= 2 x^\top \mathbf{Z}(t)^{-1} \frac{\partial \mathbf{Z}(t)}{\partial t} \mathbf{Z}(t)^{-1} \frac{\partial \mathbf{Z}(t)}{\partial t} \mathbf{Z}(t)^{-1} x\\
    &= 2 \left( \frac{\partial \mathbf{Z}(t)}{\partial t} \mathbf{Z}(t)^{-1} x\right)^\top  \mathbf{Z}(t)^{-1} \left( \frac{\partial \mathbf{Z}(t)}{\partial t} \mathbf{Z}(t)^{-1} x\right)\\
    &\geq 0 \enspace,
\end{align*}
which shows convexity for any fixed $x \in X$. The final results yields from the fact that $h_X$ is a maximum over convex functions.




\end{proof}

\section{Additional experiment and information}

We define the set of arms $\mathcal{X} \subset \mathbb{R}^5$ that is made of $|\mathcal{X}| =  100$ node-arms randomly sampled from a multivariate 5-dimensional Gaussian distribution $\mathcal{N}(0, I)$ and then normalized so that $\|x\|=1$ for all $x \in \mathcal{X}$. In all the figures the results are averaged over 100 random repetitions of the experiments.

We propose to validate our insight and compute the evolution of $\|\var\left(\mathbf{A}_1\right)\|$ for the three types of graphs (star, complete and circle) and different number of edges. The results are shown in Figure~\ref{fig:var}. One can notice that we retrieve the $O(m^2)$ dependence of the variance for the star graph, the $O(m \sqrt{m})$ for the complete graph and the linear dependence $O(m)$ for the circle graph.

\begin{figure}[H]
    \centering
    \includegraphics{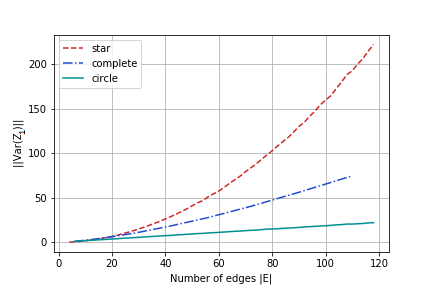}
    \caption{Evolution of the variance according to the number of edges and the type of graph (star, complete, circle), the variance being averaged over 100 repetitions.}
    \label{fig:var}
\end{figure}

\paragraph{Machine used for all the experiments.}  Intel(R) Xeon(R) CPU E5-2667 v4 @ 3.20GHz - 24 CPUs used.

\end{document}